\definecolor{darkgreen}{rgb}{0,0.5,0}
\DeclareMathOperator*{\argmax}{arg\,max}
\DeclareMathOperator*{\argmin}{arg\,min}
\DeclareMathOperator{\bigO}{\mathcal{O}}
\DeclareMathOperator{\reals}{\mathbb{R}}
\DeclareMathOperator{\action}{\mathbf{a}}
\DeclareMathOperator{\state}{\mathbf{s}}
\newcommand{\B}{\mathcal{B}}
\DeclareMathOperator{\E}{\mathbb{E}} %
\newcommand{\chen}[1]{\textcolor{blue}{\{Chen: #1\}}}
\newcommand{\guy}[1]{\textcolor{darkgreen}{\{Guy: #1\}}}
\newcommand{\todo}[1]{\textcolor{red}{\{TODO: #1\}}}
\newtheorem{proposition}{Proposition}
\newtheorem{assumption}{Assumption}
\newtheorem*{lemma}{Lemma}
\newtheorem{theorem}{Theorem}
\newtheorem{defn}{Definition}
\def\A{{\mathcal A}}
\def\E{{\mathbb{E}}}
\DeclarePairedDelimiterX{\norm}[1]{\lVert}{\rVert}{#1}
\newcommand{\pth}[1]{\left( #1 \right) }
\newcommand{\abs}[1]{{\left| #1 \right| }}
\newcommand{\set}[1]{\left\{ #1 \right\} }
\newcommand{\braces}[1]{\left\{ #1 \right\} }
\newcommand{\cmnt}[1]{\ignorespaces}
\definecolor{darkgray}{rgb}{0.66, 0.66, 0.66}
\renewcommand{\thefootnote}{\fnsymbol{footnote}}
\author{Chen Tessler$^*$, Guy Tennenholtz$^*$ and Shie Mannor \\
$^*$ Equal Contribution \\
\texttt{chen.tessler@campus.technion.ac.il}, \texttt{guytenn@gmail.com}, \texttt{shie@ee.technion.ac.il}\\
Technion Institute of Technology, Haifa, Israel}
\begin{document}

\title{Distributional Policy Optimization: \\
An Alternative Approach for Continuous Control}

\maketitle
\renewcommand{\thefootnote}{\arabic{footnote}}

\begin{abstract}
We identify a fundamental problem in policy gradient-based methods in continuous control. As policy gradient methods require the agent's underlying probability distribution, they limit policy representation to parametric distribution classes. We show that optimizing over such sets results in local movement in the action space and thus convergence to sub-optimal solutions. We suggest a novel distributional framework, able to represent arbitrary distribution functions over the continuous action space. Using this framework, we construct a generative scheme, trained using an off-policy actor-critic paradigm, which we call the Generative Actor Critic (GAC). Compared to policy gradient methods, GAC does not require knowledge of the underlying probability distribution, thereby overcoming these limitations. Empirical evaluation shows that our approach is comparable and often surpasses current state-of-the-art baselines in continuous domains.
\end{abstract}

\section{Introduction}
Model-free Reinforcement Learning (RL) is a learning paradigm which aims to maximize a cumulative reward signal based on experience gathered through interaction with an environment \citep{sutton1998reinforcement}. It is divided into two primary categories. Value-based approaches involve learning the value of each action and acting greedily with respect to it (i.e., selecting the action with highest value). On the other hand, policy-based approaches (the focus of this work) learn the policy directly, thereby explicitly learning a mapping from state to action.

Policy gradients (PGs) \citep{sutton2000policy} have been the go-to approach for learning policies in empirical applications. The combination of the policy gradient with recent advances in deep learning has enabled the application of RL in complex and challenging environments. Such domains include continuous control problems, in which an agent controls complex robotic machines both in simulation \citep{schulman2015trust,haarnoja2017reinforcement,peng2018deepmimic} as well as real life \citep{levine2016end,andrychowicz2018learning,riedmiller2018learning}. Nevertheless, there exists a fundamental problem when PG methods are applied to continuous control regimes. As the gradients require knowledge of the probability of the performed action $P(\action | \state)$, the PG is empirically limited to parametric distribution functions. Common parametric distributions used in the literature include the Gaussian \citep{schulman2015trust,schulman2017proximal}, Beta \citep{beta_gradients} and Delta \citep{silver2014deterministic,lillicrap2015continuous,fujimoto2018addressing} distribution functions.

In this work, we show that while the PG is properly defined over parametric distribution functions, it is prone to converge to sub-optimal exterma (Section~\ref{sec:dist_approach}). The leading reason is that these distributions are not convex in the distribution space\footnote{As an example, consider the Gaussian distribution, which is known to be non-convex.} and are thus limited to local improvement in the action space itself. Inspired by Approximate Policy Iteration schemes, for which convergence guarantees exist \citep{puterman1979convergence}, we introduce the Distributional Policy Optimization (DPO) framework in which an agent's policy evolves towards a \textit{distribution} over improving actions. This framework requires the ability to minimize a distance (loss function) which is defined over two distributions, as opposed to the policy gradient approach which requires an explicit differentiation through the density function.



DPO establishes the building blocks for our generative algorithm, the Generative Actor Critic\footnote{Code provided in the following \emph{anonymous} repository: \href{https://github.com/tesslerc/GAC}{github.com/tesslerc/GAC}}. It is composed of three elements: a generative model which represents the policy, a value, and a critic. The value and the critic are combined to obtain the advantage of each action. A target distribution is then defined as one which improves the value (i.e., all actions with negative advantage receive zero probability mass). The generative model is optimized directly from samples without the explicit definition of the underlying probability distribution using quantile regression and Autoregressive Implicit Quantile Networks (see Section~\ref{sec: method: our approach}). Generative Actor Critic is evaluated on tasks in the MuJoCo control suite (Section~\ref{sec:experiments}), showing promising results on several difficult baselines.



\cmnt{We call this generative approach, the Generative Actor Critic. Specifically, we use an Autoregressive Implicit Quantile Network (AIQN) \citep{ostrovski2018autoregressive} to represent the policy. Combining both a critic and a value network, we are capable of estimating the advantage of each action, thus the target policy is defined as a distribution over the improving actions, i.e., actions with positive advantage. Finally, the loss is constructed such that the distribution represented by the actor shifts towards this improved policy. We validate our approach on various continuous control problems under the MuJoCo \citep{todorov2012mujoco} control suite. Our empirical results show that the approach indeed works as intended, resulting in competitive results across all domains while exhibiting much lower variance. We believe that these results motivate a new line of research, which combines generative modeling and policy learning, detaching from the standard PG formulation.}

\cmnt{
\section{Introduction}
Improving the performance with experience is the hallmark of Reinforcement Learning (RL). There are quite a few successful RL algorithms that theoretically solve any problem that can be cast in the Markov Decision Process (MDP) framework. Recent advances in deep learning have enabled the application of RL in complex and challenging environments. Such domains include continuous control problems, in which an agent controls complex robotic machines both in simulation \citep{schulman2015trust,haarnoja2017reinforcement,peng2018deepmimic} as well as real life \citep{levine2016end,andrychowicz2018learning,riedmiller2018learning}. Despite these advances, existing approaches for continuous control lack theoretical convergence guarantees.

Current continuous, policy gradient based approaches restrict the optimization process to parametric distribution classes (e.g., Gaussian, Delta functions) which are \textit{non-convex} by nature. In Proposition~\ref{prop: k-modal doesnt converge}, we show that this can result in convergence to arbitrarily bad solutions. Policy gradient based approaches in discrete action spaces \citep{sutton1998reinforcement} are free of these limitations, as the model of policy distributions on this set is in general not restricted. Representation of general policy distributions in the continuous setting, similar to those of discrete action spaces, would ensure convergence to global optima. While there exist certain formulations, such as LQR, in which the decision problem at each state consists of solving a quadratic optimization problem, constructing algorithms with such guarantees for the general case is essential to solving continuous control problems.

There are several ways in which the restrictions of current policy gradient methods can be tackled. First, one may attempt to enrich the policy space using a mixture of parametric distributions. However, as we suggest in Section~\ref{sec:policy search}, this does not mitigate the issue, as the parametric representation remains limited, and in most cases non-convex. A second, valid approach, is discretization of the action space \citep{tang2019discretizing}. Here, however, optimality is controlled by how finely discretization is performed. A third approach may be to combine non-convex optimization methods as in \cite{munos2011optimistic} and \cite{bartlett2018simple} in order to find the optimal action at each step. These adaptive sampling methods find solutions which are at most $\bigO(e^{-\sqrt{n}})$ away from optimal (i.e., simple regret). Nevertheless, similar to discretization of the action space, these methods are incapable of finding an optimal solution.

In this work, we build upon $\alpha$-Policy Iteration schemes \citep{scherrer2014approximate} and suggest an alternative training paradigm. Contrary to the common policy gradient method, which focus on parametric distribution functions, we model the policy using a \emph{generative model}. In theory, this model can represent arbitrary distribution functions and thus does not suffer from the sub-optimality inherent in training using parametric distributions. \cmnt{Since generative models do not produce the p.d.f., but rather provide a method for sampling from the distribution;} The generative model is updated by minimizing the distance (e.g., Wasserstein) between the actor's policy distribution and some improving policy. We show how this update rule can be implemented using Implicit Quantile Network \citep{dabney2018implicit} and the Autoregressive Quantile Loss \citep{ostrovski2018autoregressive}. We call this method the Generative Actor Critic (GAC). Empirical evaluation on several robotic tasks \citep{todorov2012mujoco} show that our approach is capable of attaining competitive performance. Moreover, GAC exhibits much lower variance than previous approaches and often outperforms them.

This paper is organized as follows. In Section~\ref{sec:preliminaries} we provide the preliminary definitions and notations we will use throughout the paper. In Section~\ref{sec:policy search} we compare two policy search \mbox{procedures: (i) methods} which consider parametric distributions and directly learn the parameters (e.g., learning the mean $\mu$ and variance $\sigma$ of a Gaussian distribution), and (ii) methods which consider the entire policy distribution space. We show that, when the optimization process results in a smoothly evolving policy (see Definition \ref{def:smooth evolving}), then parametric finite-modal distributions are not ensured to converge to an optimal solution, even though there exists a deterministic policy which is optimal and is contained in the set of uni-modal distributions. 
In Sections~\ref{sec: general policy spaces} and \ref{sec:dist_approach}, we show that in order to find an optimal policy one must consider the entire policy space, as opposed to a parametric distributions, which only considers a subset of it. 
This leads to our approach, the Generative Actor Critic, which is presented in Section~\ref{sec: method: our approach} and evaluated in Section~\ref{sec:experiments}. We conclude the paper with an overview of related work in Section~\ref{sec:related_work} as well as a short discussion in Section~\ref{sec:discussion}.

}

\section{Preliminaries}
\label{sec:preliminaries}
We consider an infinite-horizon discounted Markov Decision Process (MDP) with a continuous action space. An MDP is defined as the 5-tuple $(\mathcal{S}, \mathcal{A},P,r,\gamma)$ \citep{puterman1994markov}, where ${\mathcal S}$ is a countable state space, $\mathcal{A}$ the continuous action space, ${P : S \times S \times \mathcal{A} \mapsto [0,1]}$ is a transition kernel, ${r : S \times A \to [0,1]}$ is a reward function, and $\gamma\in(0,1)$ is the discount factor. Let $\pi: \mathcal{S} \mapsto \mathcal{B}(\A)$ be a stationary policy, where $\mathcal{B}(\A)$ is the set of probability measures on the Borel sets of $\mathcal{A}$. We denote by $\Pi$ the set of stationary stochastic policies. In addition to $\Pi$, often one is interested in optimizing over a set of parametric distributions. We denote the set of possible distribution parameters by $\Theta$ (e.g., the mean $\mu$ and variance $\sigma$ of a Gaussian distribution). 

Two measures of interest in RL are the value and action-value functions ${v^\pi \in \mathbb{R}^{|\mathcal{S}|}}$ and ${Q^\pi \in \mathbb{R}^{|\mathcal{S}| \times |\mathcal{A}|}}$, respectively. The value of a policy $\pi$, starting at state $\state$ and performing action $\action$ is defined by ${Q^\pi (\state, \action) = \mathbb{E}^\pi \left[\sum_{t=0}^\infty \gamma^t r(\state_t, \action_t) \mid \state_0 = \state, \action_0 = \action\right]}$. The value function is then defined by $v^\pi = \mathbb{E}^\pi [Q^\pi (\state, \action)]$. Given the action-value and value functions, the advantage of an action $\action \in \mathcal{A}$ at state $\state \in \mathcal{S}$ is defined by $A^\pi (\state, \action) = Q^\pi (\state, \action) - v^\pi (\state)$. The optimal policy is defined by $\pi^* = \argmax_{\pi \in \Pi} v^\pi$ and the optimal value by $v^* = v^{\pi^*}$.

\section{From Policy Gradient to Distributional Policy Optimization}
\label{sec:dist_approach}

Current practical approaches leverage the Policy Gradient Theorem \citep{sutton2000policy} in order to optimize a policy, which updates the policy parameters according to 
\begin{equation}\label{eqn: policy gradient}
    \theta_{k+1} = \theta_k + \alpha_k \E_{\state \sim d\pth{\pi_{\theta_k}}} \E_{\action \sim \pi_{\theta_k}(\cdot|\state)} \nabla_\theta \log \pi_\theta (\action|\state) \mid_{\theta = \theta_k} Q^{\pi_{\theta_k}} (\state, \action) \, ,
\end{equation}
where $d\pth{\pi}$ is the stationary distribution of states under $\pi$. Since this update rule requires knowledge of the log probability of each action under the current policy $\log \pi_\theta (\action | \state)$, empirical methods in continuous control resort to parametric distribution functions. Most commonly used are the Gaussian \citep{schulman2017proximal}, Beta \citep{beta_gradients} and deterministic Delta \citep{lillicrap2015continuous} distribution functions. However, as we show in Proposition~\ref{prop:gaussian doesnt converge}, this approach is not ensured to converge, even though there exists an optimal policy which is deterministic (i.e., Delta) - a policy which is contained within this set.

The sub-optimality of uni-modal policies such as Gaussian or Delta distributions does not occur due to the limitation induced by their parametrization (e.g., the neural network), but is rather a result of the predefined set of policies. As an example, consider the set of Delta distributions. As illustrated in Figure~\ref{fig:param_vs_policy}, while this set is convex in the parameter $\mu$ (the mean of the distribution), it is not convex in the set $\Pi$. This is due to the fact that $(1-\alpha) \delta_{\mu_1} + \alpha \delta_{\mu_2}$ results in a stochastic distribution over two supports, which cannot be represented using a single Delta function. Parametric distributions such as Gaussian and Delta functions highlight this issue, as the policy gradient considers the gradient w.r.t. the parameters $\mu, \sigma$. This results in local movement in the action space. Clearly such an approach can only guarantee convergence to a locally optimal solution and not a global one.

\begin{figure}[t!]
  \begin{subfigure}[]{0.64\textwidth}
    \centering
    \includegraphics[width=\textwidth]{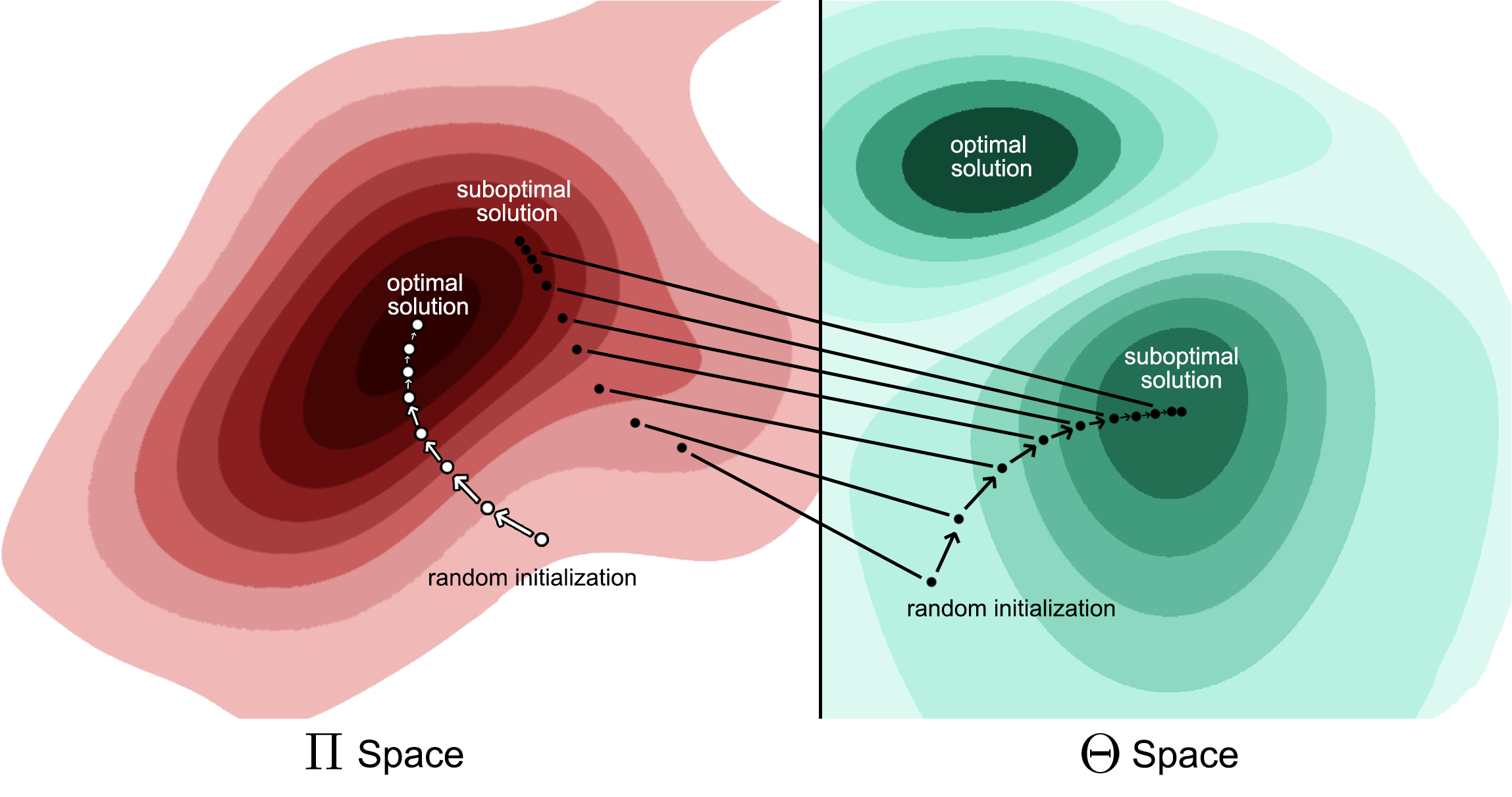}
    \caption{Policy vs. Parameter Space}\label{fig: param vs policy gradient comparison}
  \end{subfigure}%
  \hspace*{0.2cm}
  \begin{subfigure}[]{0.3\textwidth}
    \centering
    \includegraphics[width=\textwidth]{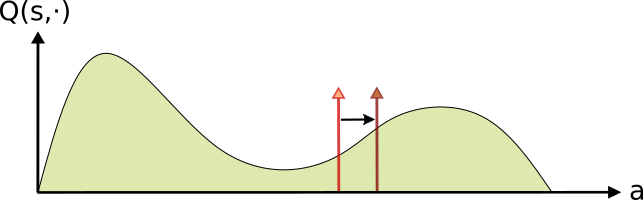}
    \caption{Delta}\label{fig: delta evolution}
    \vspace*{\baselineskip} 
    \includegraphics[width=\textwidth]{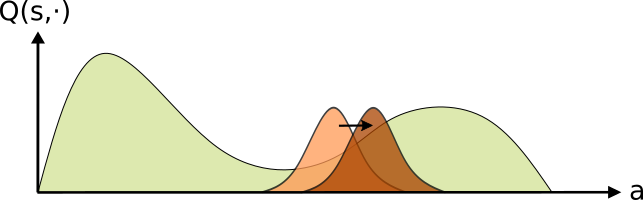}    
    \caption{Gaussian}
  \end{subfigure}
  \caption{(a): A conceptual diagram comparing policy optimization in parameter space $\Theta$ (black dots) in contrast to distribution space $\Pi$ (white dots). Plots depict $Q$ values in both spaces. As parameterized policies are non-convex in the distribution space, they are prone to converge to a local optima. Considering the entire policy space ensures convergence to the global optima. (b,c): Policy evolution of Delta and Gaussian parameterized policies for multi-modal problems.}
  \label{fig:param_vs_policy}
\end{figure}

\cmnt{
\begin{figure}[t!]
\centering
\begin{subfigure}{0.45\textwidth}
    \label{fig: general 0}
    \centering
	\includegraphics[width=\textwidth]{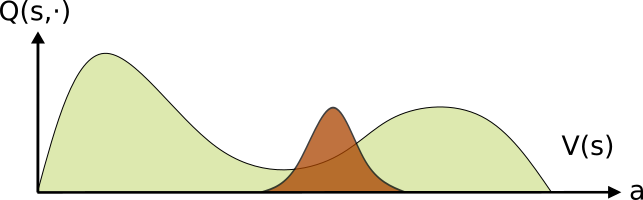}
	\caption{$\pi_0$}
\end{subfigure}%
\begin{subfigure}{0.45\textwidth}
    \label{fig: general 1}
    \centering
	\includegraphics[width=\textwidth]{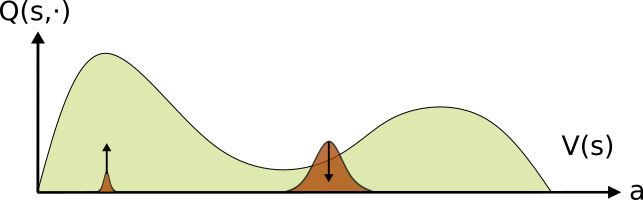}
	\caption{$\pi_1$}
\end{subfigure}%
\caption{Policy evolution of a general, non-parametric policy when the target policy is the $\argmax$. $\pi_0$ denotes the initial policy, a Gaussian in this example, and $\pi_1$ the policy after one update step. At each step probability mass is transferred to the action which maximizes the value.}
\label{fig:general_evolution}
\end{figure} 
}

\begin{proposition}\label{prop:gaussian doesnt converge}
For any initial Gaussian policy $\pi_0 \sim \mathcal{N}(\mu_0, \Sigma)$ and $L \in [0, \frac{v^*}{2})$ there exists an MDP $\mathcal{M}$ such that $\pi_\infty$ satisfies
\begin{equation}
    \norm{v^* - v^{\pi_\infty}}_\infty > L \, ,
\end{equation}
where $\pi_\infty$ is the convergent result of a PG method with step size bounded by $\alpha$. Moreover, given $\mathcal{M}$ the result follows even when $\mu_0$ is only known to lie in some ball of radius R around $\tilde{\mu}_0$, $B_R(\tilde{\mu}_0)$.
\end{proposition}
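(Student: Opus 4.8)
The plan is to construct, for a fixed initial Gaussian $\pi_0 \sim \mathcal{N}(\mu_0,\Sigma)$, a one-dimensional (or low-dimensional) bandit-like MDP — a single state with a continuous action space $\mathcal{A} \subseteq \mathbb{R}$ — whose reward landscape $r(\action)$ (equivalently $Q(\action)$, since there is effectively one decision) has two well-separated bumps: a \emph{small} local bump centered near $\mu_0$ (or anywhere inside $B_R(\tilde\mu_0)$) and a much \emph{taller} global bump far away, separated by a wide flat or low-reward valley of width $W$. The optimal policy is the Delta at the tall bump, giving $v^* $ equal to the tall bump's height; I will choose the heights so that the gap between the two bumps exceeds $L$ (possible since $L < v^*/2$ leaves room). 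The key structural fact, which I take from the discussion preceding the proposition, is that a PG update with step size bounded by $\alpha$ moves the Gaussian's parameters $(\mu,\Sigma)$ only \emph{locally} in action space: $\nabla_\theta \E_{\action\sim\pi_\theta}[Q(\action)]$ depends on $Q$ only through its correlation with $\log$-density gradients, which for a narrow Gaussian is dominated by the local behavior of $Q$ near $\mu$.

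**The core argument: a trapping basin.**

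First I would compute the PG vector field on $\Theta = \{(\mu,\Sigma)\}$ explicitly for this reward: $\nabla_\mu J(\mu,\Sigma) = \E_{\action\sim\mathcal N(\mu,\Sigma)}[\Sigma^{-1}(\action-\mu)\, Q(\action)]$ and similarly for $\Sigma$. Then I would show there is an open neighborhood $U \ni (\mu_0,\Sigma)$ in parameter space that is \emph{forward-invariant} under the PG flow (or under any discrete PG step of size $\le \alpha$), and on which the flow converges to a stationary point $(\mu_\infty,\Sigma_\infty)$ sitting at the \emph{local} bump, not the global one. The mechanism: if the valley between the bumps is wide enough relative to $\Sigma$ and to the per-step displacement bound (controlled by $\alpha$, the bounded rewards, and the curvature), then $\nabla_\mu J$ always points back toward the local bump whenever $\mu$ is on the local side of the valley — the Gaussian simply cannot "see" the far bump because the density it places there is negligible and $Q$ there is not large enough to overcome the $\Sigma^{-1}(\action-\mu)$ weighting integrated against a vanishing tail. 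One must also control $\Sigma$: I would pick the valley floor reward and the local bump shape so that the variance does not blow up (the local bump's concavity pulls $\Sigma$ down), keeping the Gaussian narrow and the trap stable. This yields $v^{\pi_\infty} \le (\text{local bump height})$, hence $\norm{v^* - v^{\pi_\infty}}_\infty = v^* - v^{\pi_\infty} \ge v^* - (\text{local height}) > L$ by the height choice.

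**Handling the uncertainty in $\mu_0$ and the discrete step size.**

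For the "moreover" clause, I would make the valley width $W$ a function of $R$: place the local bump's support so that it comfortably contains all of $B_R(\tilde\mu_0)$, and push the global bump to distance $\gtrsim R + W$ away, so that \emph{every} $\mu_0 \in B_R(\tilde\mu_0)$ starts inside the same trapping basin $U$; the construction of $U$ above then applies uniformly. To handle a genuine discrete update with step size bounded by $\alpha$ (rather than the idealized flow), I would show the per-step displacement in $\mu$ is bounded by $\alpha \cdot C$ for an explicit $C$ depending on $\|\Sigma^{-1}\|$, the reward bound $1$, and $\mathcal{A}$'s geometry, and make the valley strictly wider than this bound so no single jump can escape; convergence of the discrete iterates to a point in (a neighborhood of) the local optimum then follows from standard monotone-improvement / Lyapunov arguments for PG with sufficiently small steps on a smooth objective.

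**Main obstacle.**

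The hard part will be simultaneously controlling the covariance $\Sigma$: it is easy to trap the mean, but a PG step can inflate variance, and a Gaussian with large $\Sigma$ \emph{does} start to feel the far bump, potentially escaping the trap. The fix I anticipate is to engineer the local bump to be strongly concave (so its contribution to $\nabla_\Sigma J$ is negative, shrinking $\Sigma$) and to keep the valley reward low and flat (contributing essentially nothing), so that the closed region $U$ can be taken with a uniform upper bound on $\Sigma$; verifying forward-invariance of this two-dimensional region — and that it does not depend on the unknown $\mu_0$ beyond knowing $\mu_0 \in B_R(\tilde\mu_0)$ — is the technical crux. A secondary subtlety is that the proposition quantifies over PG methods "with step size bounded by $\alpha$" without fixing $\alpha$; I would read this as: the MDP $\mathcal{M}$ may be constructed after seeing $\alpha$ (and $\pi_0$, $R$, $L$), which is what makes the valley-width-versus-step-size comparison legitimate.
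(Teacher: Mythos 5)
Your proposal follows essentially the same route as the paper's proof: a single-state bandit whose reward has a small bump of height $\epsilon$ covering $B_R(\tilde\mu_0)$ and a much larger reward $1-\epsilon$ placed at distance $D(R,\epsilon)$ away, plus a forward-invariance/induction argument showing the PG iterates of $\mu$ stay trapped near the local bump because the Gaussian mass at the far reward is exponentially small in $D$, yielding a gap $1-2\epsilon > L$ for $\epsilon < 1/3$. The only notable difference is that the paper sidesteps what you call the technical crux of controlling $\Sigma$ by differentiating with respect to $\mu$ only (covariance held fixed, with Dirac-delta rewards in the appendix version and $\sigma \to 0$ in the sketch), so the covariance-dynamics analysis you anticipate is not actually carried out there.
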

\begin{proof}[Proof sketch]
    For brevity we prove for the case of $\action \in \mathbb{R}$, such that $B_R$ is a finite interval $[a,b]$. We also assume $[a,b] \subseteq [\mu_0 - 2\alpha, \mu_0 + 2\alpha]$, and $\sigma \to 0$. The general case proof can be found in the supplementary material. Let $\epsilon > 0$. We consider a single state MDP (i.e., x-armed bandit) with action space \cmnt{$\mathcal{A} = [\mu_0 - 2\alpha, \mu_0 + 6\alpha]$} $\mathcal{A} = \mathbb{R}$ and a multi-modal reward function (similar to the illustration in Figure~\ref{fig: delta evolution}), defined by
    \begin{equation*}
        r(\action) = \abs{\cos\pth{\frac{2\pi}{8\alpha}(\action - \mu_0)}}\pth{\epsilon W_{\mu_0 - 2\alpha, \mu_0 + 2\alpha} + (1-\epsilon)W_{\mu_0 + 2\alpha, \mu_0 + 6\alpha}},
    \end{equation*}
    where $W_{x,y}(z) = \begin{cases}
    1 & z \in [x, y] \\
    0 & \text{else}
    \end{cases}$ is the window function.
    
    In PG, we assume $\mu$ is parameterized by some parameters $\theta$. Without loss of generality, let us consider the derivative with respect to $\theta = \mu$. At iteration $k$ the derivative can be written as
    ${
        \frac{d}{d\mu} \log \pi_\mu (\action) \mid_{\mu=\mu_k} =
        -\frac{1}{2\sigma^2}
        \pth{\mu_k - \action }.
    }$
    PG will thus update the policy parameter $\mu$ by
    ${
    \mu_{k+1} = \mu_k + \alpha_k \braces{\E_{\action \sim \mathcal{N}(\mu_k, \sigma)}
        \frac{1}{2\sigma^2}\pth{\action - \mu_k}r(\action)}.
    }$
    As $\sigma \to 0$, it holds that 
    ${
    \text{sign}\braces{\E_{\action \sim \mathcal{N}(\mu_k, \sigma)}\pth{\action - \mu_k}r(\action)} = \text{sign}\braces{\frac{d}{d\action}r(\action) \mid_{\action = \mu_k}}.
    }$
    It follows that if $\epsilon < \frac{1}{3}$ and ${\mu_k \in [\mu_0-2\alpha, \mu_0 + 2\alpha]}$ then so is $\mu_{k+1}$. Then, $\mu_\infty \in [\mu_0-2\alpha, \mu_0 + 2\alpha]$. That is, the policy can never reach the interval $[\mu_0 + 2\alpha, \mu_0 + 6\alpha]$ in which the optimal solution lies. Hence, $\norm{v^* - v^{\pi_\infty}}_\infty = 1 - 2\epsilon$ and the result follows for $\epsilon < \frac{1}{3}$.
    \cmnt{
    Let $\epsilon > 0$, let $M > \epsilon$, and let $D > 0$. $D$ will be defined later as a function of $M, \epsilon, \sigma$. We consider a single state MDP (i.e., x-armed bandit) with action space \cmnt{$\mathcal{A} = [\mu_0 - 2\alpha, \mu_0 + 6\alpha]$} $\mathcal{A} = \mathbb{R}$ and a multi-modal reward function defined by
    $$
    r(\action) = \abs{\cos\pth{\frac{2\pi}{8D\alpha}(\action - \mu_0)}}\pth{\epsilon W_{\mu_0 - 2D\alpha, \mu_0 + 2D\alpha} + (M-\epsilon)W_{\mu_0 + 2D\alpha, \mu_0 + 6D\alpha}},
    $$
    where $W_{a,b}(x) = \begin{cases}
    1 & x \in [a, b] \\
    0 & \text{else}
    \end{cases}$ is the window function.
    
    In PG, we assume $\mu$ is parameterized by some parameters $\theta$. Without loss of generality, let us consider the gradient with respect to $\mu$. At iteration $k$ the gradient can be written as
    ${
        \frac{d}{d\mu} \log \pi_\theta (\action | \state) \mid_{\mu=\mu_k} =
        -\frac{1}{2\sigma^2}
        \pth{\mu_k(\state) - \action }.
    }$
    PG will thus update the policy parameter $\mu$ by
    ${
    \mu_{k+1} = \mu_k + \frac{\alpha_k}{2\sigma^2} \braces{\E_{\action \sim \mathcal{N}(\mu_k(s), \sigma)}
        \pth{\action - \mu_k(\state)}r(\action)}.
    }$
    There exists $D(M, \epsilon, \sigma) > 0$ such that if ${\mu_k < \mu_0 + 2\alpha}$ then ${\E_{\action \sim \mathcal{N}(\mu_k(s), \sigma)}\pth{\action - \mu_\theta(\state)}r(\action) < 0}$. For instance, take $D$ such that
    $$
    \int_{\action > \mu_k} r(\action)
    $$
    }
    \cmnt{
    Let $\epsilon > 0$ and let $M > \epsilon$. We consider a single state MDP (i.e., x-armed bandit) with action space $\mathcal{A} = [\mu_0 - 2\alpha, \mu_0 + 6\alpha]$ and a multi-modal reward function defined by
    $$
    r(\action) = \abs{\cos\pth{\frac{2\pi}{8\alpha}(\action - \mu_0)}}\pth{\epsilon W_{\mu_0 - 2\alpha, \mu_0 + 2\alpha} + (M-\epsilon)W_{\mu_0 + 2\alpha, \mu_0 + 6\alpha}},
    $$
    where $W_{a,b}(x) = \begin{cases}
    1 & x \in [a, b] \\
    0 & \text{else}
    \end{cases}$ is the window function.
    
    In PG, we assume $\mu$ is parameterized by some parameters $\theta$. \cmnt{We therefore have
    $$
    f_\theta(\state) = \frac{1}{(2\pi)^{\frac{p}{2}}\abs{\Sigma}^{}} \exp\braces{-(\action - \mu_\theta(\state)^T \Sigma (\state)^{-1}(\action - \mu_\theta(s))}
    $$}
    At iteration $k$ the gradient w.r.t. $\theta$ can thus be written as
    \begin{align*}
        \nabla_\theta \log \pi_\theta (\action | \state) \mid_{\mu=\mu_k} =
        -\Sigma^{-1}
        \pth{\mu_\theta(\state) - \action }
        \nabla_\theta \mu_\theta(\state).
    \end{align*}
    PG will update the policy parameters by
    $$
    \theta_{k+1} = \theta_k + \frac{\alpha_k}{2}\Sigma^{-1} \braces{\E_{\action \sim \mathcal{N}(\mu_\theta(s), \Sigma)}
        \pth{\action - \mu_\theta(\state)}r(\action)}\nabla_\theta \mu_\theta(\state).
    $$
}
\cmnt{$r(\action) = \epsilon \abs{\cos(\frac{\action}{\alpha})} + (M - \epsilon) W_{0,\pi}(\action) \abs{\cos(\frac{\action}{\alpha})}$, where $W_{a,b}(x)$ takes the value of $1$ for $a \leq x \leq b$ and 0 otherwise.}  
\end{proof}

\subsection{Distributional Policy Optimization (DPO)}
\label{sec: dpo}

In order to overcome issues present in parametric distribution functions, we consider an alternative approach. In our solution, the policy does not evolve based on the gradient w.r.t. distribution parameters (e.g., $\mu, \sigma$), but rather updates the policy distribution according to
\begin{equation*}
    \pi_{k+1} = \Gamma \left( \pi_k - \alpha_k \nabla_\pi d(\mathcal{D}^{\pi_k}_{I^{\pi_k}}, \pi) \mid_{\pi=\pi_k} \right),
\end{equation*}
where $\Gamma$ is a projection operator onto the set of distributions, $d:\Pi \times \Pi \to [0, \infty)$ is a distance measure (e.g., Wasserstein distance), and $\mathcal{D}^{\pi}_{I^{\pi}} (\state)$ is a distribution defined over the support ${I^{\pi}(\state) = \set{\action : A^{\pi}(\state,\action) > 0}}$ (i.e., the positive advantage). Table~\ref{table: distributions} provides examples of such distributions. 

Algorithm~\ref{alg:dpo} describes the Distributional Policy Optimization (DPO) framework as a three time-scale approach to learning the policy. It can be shown, under standard stochastic approximation assumptions \citep{borkar2009stochastic,konda2000actor,bhatnagar2012online,chow2017risk}, to converge to an optimal solution. DPO consists of 4 elements: (1) A policy $\pi$ on a fast timescale, (2) a delayed policy $\pi'$ on a slow timescale, (3) a value and (4) a critic, which estimate the quality of the delayed policy $\pi'$ on an intermediate timescale. Unlike the PG approach, DPO does not require access to the underlying p.d.f. In addition, $\pi$ which is updated on the fast timescale views the delayed policy $\pi'$, the value and critic as quasi-static, and as such it can be optimized using supervised learning techniques\footnote{Assuming the target distribution is 'fixed', the policy $\pi$ can be trained using a supervised learning loss, e.g., GAN, VAE or AIQN.}. Finally, we note that in DPO, the target distribution $\mathcal{D}^{\pi'}_{I^{\pi'}}$ induces a higher value than the current policy $\pi'$, ensuring an always improving policy.

\cmnt{
In DPO, the target distribution $\mathcal{D}^{\pi'}_{I^{\pi'}}$ induces a higher value than the current policy. Table~\ref{table: distributions} provides several examples, where $I^\pi (\state)$ is defined as the set of all actions $\action \in \mathcal{A}$ with a positive advantage, i.e., $I^\pi (\state) = \{ \action : Q^\pi (\state, \action) > v^\pi (\state) \}$. A naive approach would be to attempt to define the target distribution as the $\argmax$, however, in non-convex problems, this may pose infeasible. Thus considering a distribution with probability mass only on actions with positive advantage, is a more tractable approach and is ensured to result in a monotonically improving policy.}

The concept of policy evolution using positive advantage is depicted in Figure~\ref{fig:gac_evolution}. While the policy starts as a uni-modal distribution, it is not restricted to this subset of policies. As the policy evolves, less actions have positive advantage, and the process converges to an optimal solution. In the next section we construct a practical algorithm under the DPO framework using a generative actor.

\begin{algorithm}[t]
    \caption{Distributional Policy Optimization (DPO)}
	\label{alg:dpo}
	\begin{algorithmic}[1]
	    \State Input: learning rates $\alpha_k \gg \beta_k \gg \delta_k$
	    \State $\pi_{k+1} = \Gamma \left( \pi_k - \alpha_k \nabla_\pi d(\mathcal{D}^{\pi_k'}_{I^{\pi_k'}}, \pi) \mid_{\pi=\pi_k} \right)$
        \State $Q^{\pi'}_{k+1} (\state, \action) = Q^{\pi'}_k (\state, \action) + \beta_k \left( r(\state, \action) + \gamma v^{\pi'}_k (\state) - Q^{\pi'}_k (\state, \action) \right)$
        \State $v^{\pi'}_{k+1} (\state) = v^{\pi'}_k + \beta_k \int_{\mathcal{A}} \left( Q^{\pi'}_k (\state, \action) - v^{\pi'}_k (\state) \right)$
        \State $\pi_{k+1}' = \pi_k' + \delta_k (\pi_k - \pi_k')$
	\end{algorithmic}
	\vspace{-0.1cm}
\end{algorithm}

\begin{table}[t!]
\begin{center}
\caption{Examples of target distributions over the set of improving actions}\label{table: distributions}
\begin{tabular}{|l|l|}
    \hline
    \\[-1em]
    Argmax
    & $\mathcal{D}^\pi_{I^\pi(\state)}(\action|\state) = \delta_{\arg\max_{a \in I(\pi)} A^\pi(\state,\action)}(\action|\state)$ \\
    \hline
    \\[-1em]
    Linear & $\mathcal{D}^\pi_{I^\pi(\state)}(\action|\state) = \mathbf{1}_{\braces{\action \in I^\pi}}\frac{A^\pi(\state,\action)}{\int_{I^\pi(\state)}A^\pi(\state,\action')d \action'}$ \\
    \hline
    \\[-1em]
    Boltzmann ($\beta > 0$) & $\mathcal{D}^\pi_{I^\pi(\state)}(\action|\state) = \mathbf{1}_{\braces{\action \in I^\pi}}\frac{\exp\pth{\frac{1}{\beta}A^\pi(\state,\action)}}{\int_{I^\pi(\state)} \exp\pth{\frac{1}{\beta}A^\pi(\state,\action')}d \action'}$ \\
    \hline
    \\[-1em]
    Uniform & $\mathcal{D}^\pi_{I^\pi(\state)}(\action|\state) = \text{Uniform}(I^\pi(\state))$ \\
    \hline
\end{tabular}
\end{center}
\end{table}

\cmnt{
The value and critic estimators which the advantage function and a delayed policy are learned in slower time-scales. One can show, under standard stochastic approximation assumptions \citep{borkar2009stochastic,konda2000actor,bhatnagar2012online,chow2017risk} that Algorithm \ref{alg:dpo} converges to an optimal fixed point $(\pi^*, Q^{\pi^*}, v^{\pi^*})$ (see supplementary material). Informally, the critic $Q^{\pi'}$ and value $v^{\pi'}$ estimators perform policy evaluation of the delayed policy $\pi'$. The policy $\pi$ minimizes the distance between its current distribution and that of $\mathcal{D}^{\pi'}_{I^{\pi'}}$. Finally, the delayed actor $\pi'$ tracks the policy $\pi$ slowly. While uncommon, the delayed actor is a crucial component, as minimizing the distribution distance $d$ does not necessarily result in a monotonically improving policy at each step, even though the converged result $\mathcal{D}_{I^{\pi'}}^{\pi'}$ does.

In order to overcome these issues, one must consider an alternative approach, in which the policy does not evolve based on the gradient w.r.t. the distributions parameters, i.e., $\mu, \sigma$. We suggest an update rule which conservatively updates a policy given a target distribution $\pi_\text{target}$. More specifically, at iteration $k$, for $\alpha_k \in (0,1)$, the policy update rule is given by
\begin{equation}\label{eqn:alpha_greedy}
    \pi_{k+1}(\action|\state) = (1-\alpha_k)\pi_k(\action|\state) + \alpha_k \pi_\text{target}(\action|\state) \, .
\end{equation}
This update rule is used by well-known Policy Iteration schemes \citep{scherrer2014approximate}, including: \mbox{$\alpha$-Approximate Policy Iteration (API($\alpha$))}, $\alpha$-Conservative Policy Iteration (CPI($\alpha$)), and their exact form, $\alpha$-Policy Iteration (PI($\alpha$)). These methods use the greedy target policy ${\pi_\text{target}(\action|\state) \in \argmax_{\action \in A} (r(\state,\action) + \gamma \sum_{\state' \in S} P^\pi(\state'|\action) v^{\pi_k}(\state'))}$ in the exact case, or the $\epsilon$-greedy target policy in the approximate case. Figure~\ref{fig:general_evolution} illustrates this procedure, it presents a policy that evolves between two distributions - a Gaussian centered over some action to a delta function of the greedy action w.r.t. $Q$. It is well known that when $\pi_\text{target}$ is the 1-step greedy policy, this procedure converges to a globally optimal policy \citep{kakade2002approximately, scherrer2014approximate}. 

Finding the $\argmax$ over the continuous action set is a hard problem in non-convex regimes, hence, it is reasonable to instead define the target policy (i.e., $\pi_\text{target}$) as a distribution over improving actions. Distributional Policy Optimization (DPO) optimizes the policy in $\Pi$-space. The update rule of the policy is then defined by
\begin{equation*}
    \pi_{k+1} = \Gamma \left( \pi_k - \alpha_k \nabla_\pi d(\mathcal{D}^{\pi_k}_{I^{\pi_k}}, \pi) \mid_{\pi=\pi_k} \right) \,,
\end{equation*}
where $\Gamma$ is a projection operator onto the set of distributions, $d:\Pi \times \Pi \to [0, \infty)$ is a distance measure, and $\mathcal{D}^{\pi}_{I^{\pi}} (\state)$ is a distribution defined over the support $I^{\pi}(\state) = \set{\action : A^{\pi}(\state,\action) > 0}$ (i.e., the positive advantage). Table~\ref{table: distributions} provides examples of such distributions. 
}

\cmnt{
\section{Policy Search}\label{sec:policy search}

In this section we compare two paradigms for policy search. The first focuses on parametric distribution functions and on policy search w.r.t. the distributions parameters $\Theta$ (e.g., directly optimizing the mean and variance of a Gaussian distribution), whereas the second approach considers the general class of distributions, namely $\Pi$. We show that optimization in the parametric distribution space is prone to sub-optimal behavior as opposed to policy space, in which optimality is guaranteed.

We will focus on gradient based approaches, in which the policy evolves smoothly over time. Given a distance metric, we assume the policy remains ``close" between subsequent updates. More specifically, we define a smoothly evolving policy as follows.
\begin{defn}
\label{def:smooth evolving}
A $C$-smoothly evolving policy $[\pi_1, \pi_2, \hdots, \pi_k]$ w.r.t. $\alpha_k$ is defined by
\begin{equation}
    D(\pi_{k+1}, \pi_k) \leq C \cdot \alpha_k  , \enspace \forall 1 \leq k < n \, ,
\end{equation}
where 
$D$ is a distance metric (e.g., Wasserstein distance), $C$ is constant, and $\alpha_k$ is the step size.
\end{defn}
This definition is closely related to the concept of trust region policy optimization \citep{kakade2002approximately, schulman2015trust, schulman2017proximal}. In \cite{schulman2015trust} the KL-divergence is used as a premetric, constraining the policy, similar to Definition~\ref{def:smooth evolving}.

\begin{figure}[t!]
  \begin{subfigure}[]{0.64\textwidth}
    \centering
    \includegraphics[width=\textwidth]{figures/gac}
    \caption{Policy vs. Parameter Space}\label{fig: param vs policy gradient comparison}
  \end{subfigure}%
  \hspace*{0.2cm}
  \begin{subfigure}[]{0.3\textwidth}
    \centering
    \includegraphics[width=\textwidth]{figures/ddpg_evolution}
    \caption{Delta}\label{fig: delta evolution}
    \vspace*{\baselineskip} 
    \includegraphics[width=\textwidth]{figures/ppo_evolution}    
    \caption{Gaussian}
  \end{subfigure}
  \caption{(a): A conceptual diagram comparing policy optimization in $\Theta$ (black dots) in contrast to $\Pi$ (white dots). Plots depict $Q$ values in both spaces. While parametrized policies are non-convex sets in the distribution space, thus prone to converge to a local optima, approaches which consider the entire policy space ensure attainment of a global optima. (b,c): Policy evolution of Delta and Gaussian parameterized policies, respectively.}
  \label{fig:param_vs_policy}
  \vskip -0.2in
\end{figure}

\subsection{Optimizing over $\Theta$}
\label{sec:parameter_space}


Current practical approaches leverage the Policy Gradient Theorem \citep{sutton2000policy} in order to optimize a policy, which updates the policy parameters according to 
\begin{equation}\label{eqn: policy gradient}
    \theta_{k+1} = \theta_k + \alpha_k \E_{\state \sim d\pth{\pi_{\theta_k}}} \E_{\action \sim \pi_{\theta_k}(\cdot|\state)} \nabla_\theta \log \pi_\theta (\action|\state) \mid_{\theta = \theta_k} Q^{\pi_{\theta_k}} (\state, \action) \, ,
\end{equation}
where $d\pth{\pi}$ is the stationary distribution of states under $\pi$. Since this update rule requires knowledge of the log probability of each action under the current policy, empirical methods in continuous control resort to parametric distribution functions. Most commonly used are the Gaussian \citep{schulman2017proximal}, Beta \citep{beta_gradients} and deterministic Delta \citep{lillicrap2015continuous} distribution functions. However, as Proposition~\ref{prop: k-modal doesnt converge} shows, this approach is not ensured to converge even when the policy space is enriched through mixture models, such as a mixture of $k$ Gaussians $\Theta^{kg}$, even though there exists an optimal policy which is deterministic (i.e., Delta) - a policy which is contained within this set.


Under smoothness assumptions of $Q$, the update in Equation \eqref{eqn: policy gradient} results in a smoothly evolving policy (Definition \ref{def:smooth evolving}). However, as Proposition~\ref{prop: k-modal doesnt converge} shows, this approach is not ensured to converge even when the policy space is enriched through mixture models, such as a mixture of $k$ Gaussians $\Theta^{kg}$, even though there exists an optimal policy which is deterministic (i.e., Delta) - a policy which is contained within this set.

\begin{proposition}\label{prop: k-modal doesnt converge}
For any $k < \infty$, initial policy $\pi^0 \in \Theta^{kg}$ which is a mixture of $k$ Gaussians, and $L \in [0, v^*)$ there exists an MDP $\mathcal{M}$ such that $\pi^\infty$ satisfies
\begin{equation}
    \norm{v^* - v^{\pi^\infty}}_\infty > L \, ,
\end{equation}
where $\pi^\infty$ is the convergent result of a smoothly evolving policy (Definition~\ref{def:smooth evolving}) following the policy gradient direction \eqref{eqn: policy gradient} initialized at $\pi^0$ and restricted to the class of $k$-mixture distributions, $\Theta^{kg}$.
\end{proposition}
\begin{proof}

Let $\alpha$ be an upper bound on the step size of the smoothly evolving policy. Denote by $\{\mu_i\}_{i=1}^k$ the modes of the Gaussian mixture. Denote by $\mu_0 = \min_i \mu_i - \alpha$. Denote $\Delta_i = \frac{\mu_{i+1} - \mu_i}{2}. $Let $\mathcal{A} = [\mu_0, \max_i \mu_i ]$. Let $\epsilon > 0$ and let $M > \epsilon$. We define $r(\action) = W_{\mu_0, \mu_0 + D\cdot\mathbf{1}elta_0}(\action) + \sum_{i=1}^{k-1} W_{\mu_{i-1} + \Delta_{i-1}, \mu_i + \Delta_i }(\action) + W_{\mu_{k-1} + \Delta_{k-1}, \mu_k}(\action)$, where $W_{a,b}(x)$ takes the value of $1$ for $a \leq x \leq b$ and 0 otherwise.

\cmnt{$r(\action) = \epsilon \abs{\cos(\frac{\action}{\alpha})} + (M - \epsilon) W_{0,\pi}(\action) \abs{\cos(\frac{\action}{\alpha})}$, where $W_{a,b}(x)$ takes the value of $1$ for $a \leq x \leq b$ and 0 otherwise.} Notice that there exist $k$ local maxima with the value of $\epsilon$ and a single maxima with the value of $M$. It is easy to see that $v^* = M$ for choosing action $\action = 0$. Next, consider a $k$-modal policy $\pi^0$, in which the mean of each modality is initialized at $\mu_i = 2\pi \cdot (i + 1) + \pi/2$ for $i \in \{0 \hdots k\}$. We have that $v^{\pi^0} = \epsilon$. Furthermore, since $\pi^k$ is a smoothly evolving policy restricted to $\Theta^{km}$ we have that $\pi^k = \pi^0$ for all $k$. Hence, $\norm{v^* - v^{\pi^\infty}}_\infty = M - \epsilon$ and the result follows.
\end{proof}

\begin{figure}[t!]
\centering
\begin{subfigure}{0.45\textwidth}
    \label{fig: general 0}
    \centering
	\includegraphics[width=\textwidth]{figures/general_0}
	\caption{$\pi_0$}
\end{subfigure}%
\begin{subfigure}{0.45\textwidth}
    \label{fig: general 1}
    \centering
	\includegraphics[width=\textwidth]{figures/general_1}
	\caption{$\pi_1$}
\end{subfigure}%
\caption{Policy evolution of a general, non-parametric policy when the target policy is the $\argmax$. $\pi_0$ denotes the initial policy, a Gaussian in this example, and $\pi_1$ the policy after one update step. At each step probability mass is transferred to the action which maximizes the value.}
\label{fig:general_evolution}
\end{figure} 

The sub-optimality of parametric policy search does not occur due to the limitation induced by the parametrization of the policy function (e.g., the neural network), but is rather a result of the predefined set of policies. As an example, consider the set of Delta distributions. As illustrated in Figure~\ref{fig:param_vs_policy}, while this set is convex in the parameter $\mu$ (the mean of the distribution), it is not convex in the set $\Pi$. This is due to the fact that $(1-\alpha) \mu_1 + \alpha \mu_2$ results in a stochastic distribution over two supports, which cannot be represented using a single Delta function. Parametric distributions such as Gaussian and Delta functions highlight this issue, as the policy gradient considers the gradient w.r.t. the parameters $\mu, \sigma$, this results in local movement in the action space. Clearly such an approach can only guarantee convergence to a locally optimal solution and not a global one.


\subsection{Optimizing over $\Pi$}
\label{sec: general policy spaces}

As seen in Section~\ref{sec:paramet\section{Distributional Policy Optimization (DPO)}
\label{sec:dist_approach}er_space}, limiting the policy space may potentially compromise the optimality of the solution. In order to ensure convergence, one must take into account the entire set of policy distributions, namely $\Pi$. We consider an update rule which conservatively updates a policy given a target distribution $\pi_\text{target}$. More specifically, at iteration $k$, for $\alpha_k \in (0,1)$, the policy update rule is given by
\begin{equation}\label{eqn:alpha_greedy}
    \pi_{k+1}(\action|\state) = (1-\alpha_k)\pi_k(\action|\state) + \alpha_k \pi_\text{target}(\action|\state) \, .
\end{equation}

This update rule is used by well-known Policy Iteration schemes \citep{scherrer2014approximate}, including: \mbox{$\alpha$-Approximate Policy Iteration (API($\alpha$))}, $\alpha$-Conservative Policy Iteration (CPI($\alpha$)), and their exact form, $\alpha$-Policy Iteration (PI($\alpha$)). These methods use the greedy target policy ${\pi_\text{target}(\action|\state) \in \argmax_{\action \in A} (r(\state,\action) + \gamma \sum_{\state' \in S} P^\pi(\state'|\action) v^{\pi_k}(\state'))}$ in the exact case, or the $\epsilon$-greedy target policy in the approximate case. This procedure is illustrated in Figure~\ref{fig:general_evolution} for the exact case. The plot depicts a policy that evolves between two distributions - a Gaussian centered over some action to a delta function of the greedy action w.r.t. $Q$. It is well known that when $\pi_\text{target}$ is the 1-step greedy policy, this procedure converges to a globally optimal policy \citep{kakade2002approximately, scherrer2014approximate}. An interesting property of this approach is that it results in a smoothly evolving policy (Definition~\ref{def:smooth evolving}). Inspired by these methods, in the next section we introduce Distributional Policy Optimization (DPO), a distributional approach for updating a policy in $\Pi$-space.
}

\cmnt{
\subsection{A Generative Approach using Quantile Regression}

A benefit of using distributions over actions is that they can be approximated using samples. Optimization in policy space $\Pi$ can thus be applied using empirical samples of the target distribution $\mathcal{D}^{\pi_k}_{I^\pi}$. This requires the ability to represent arbitrarily complex distributions for the policy, achievable using generative modeling techniques.

Contrary to parametric distribution functions (e.g., Gaussian, in which the model outputs the mean and variance of the distribution), a generative model learns to directly map $\tau \sim U([0,1])$ to some target distribution. Generative models are able to represent arbitrarily complex distribution functions as they are only limited by their modeling capacity and not by the parameters of the predefined parametric distribution class. 

Optimization of generative models has been extensively studied in recent years with advances such as generative adversarial networks \citep{goodfellow2014generative}, variational inference \citep{kingma2013auto}, and autoregressive density estimation \citep{van2016conditional}. Here, we build upon the well-understood statistical method of quantile regression in order to optimize a generative model for the actor policy $\pi$ towards a target distribution $\pi'$.}

\section{Method}\label{sec: method: our approach}

In this section we present our method, the Generative Actor Critic, which learns a policy based on the Distributional Policy Optimization framework (Section~\ref{sec:dist_approach}). Distributional Policy Optimization requires a model which is both capable of representing arbitrarily complex distributions and can be optimized by minimizing a distributional distance. We consider the Autoregressive Implicit Quantile Network \citep{ostrovski2018autoregressive}, which is detailed below.

\subsection{Quantile Regression \& Autoregressive Implicit Quantile Networks}\label{sec: quantile regression}

As seen in Algorithm~\ref{alg:dpo}, DPO requires the ability to minimize a distance between two distributions. The Implicit Quantile Network (IQN) \citep{dabney2018implicit} provides such an approach using the Wasserstein metric. The IQN receives a quantile value $\tau \in [0,1]$ and is tasked at returning the value of the corresponding quantile from a target distribution. As the IQN learns to predict the value of the quantile, it allows one to sample from the underlying distribution (i.e., by sampling $\tau \sim U([0,1])$ and performing a forward pass). Learning such a model requires the ability to estimate the quantiles. The quantile regression loss \citep{koenker2001quantile} provides this ability. It is given by $\rho_\tau (u) = (\tau - \mathbf{1}\{u \leq 0\})u$, where $\tau \in [0, 1]$ is the quantile and $u$ the error.

Nevertheless, the IQN is only capable of coping with univariate (scalar) distribution functions. \cite{ostrovski2018autoregressive} proposed to extend the IQN to the multi-variate case using quantile autoregression \citep{koenker2006quantile}. Let $\mathbf{X} = (X_1, \hdots , X_k)$ be an n-dimensional random variable. Given a fixed ordering of the $n$ dimensions, the c.d.f. can be written as the product of conditional likelihoods
$
    F_\mathbf{X} (x) = P \left( X^1 \leq x^1, \hdots, X^n \leq x^n \right) = \Pi_{i=1}^n F_{X^i | X^{i-1}, \hdots, X^1} (x^i) \, .
$
The Autoregressive Implicit Quantile Network (AIQN), receives an i.i.d. vector $\tau \sim U([0,1]^n)$. The network architecture then ensures each output dimension $x_i$ is conditioned on the previously generated values $x_1, \hdots, x_{i-1}$; trained by minimizing the quantile regression loss. 

\begin{figure}[t!]
\centering
\begin{subfigure}{0.45\textwidth}
    \label{fig: gac 0}
    \centering
	\includegraphics[width=\textwidth]{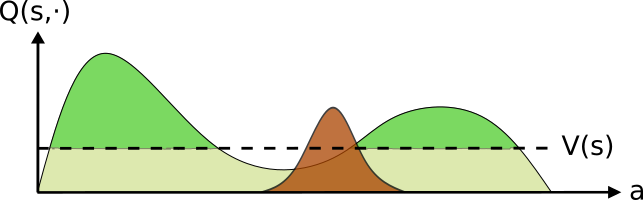}
	\caption{$\pi_0$}
\end{subfigure}%
\begin{subfigure}{0.45\textwidth}
    \label{fig: gac 1}
    \centering
	\includegraphics[width=\textwidth]{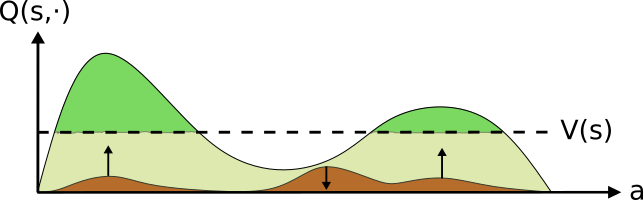}
	\caption{$\pi_1$}
\end{subfigure}%
\\
\begin{subfigure}{0.45\textwidth}
    \label{fig: gac 2}
    \centering
	\includegraphics[width=\textwidth]{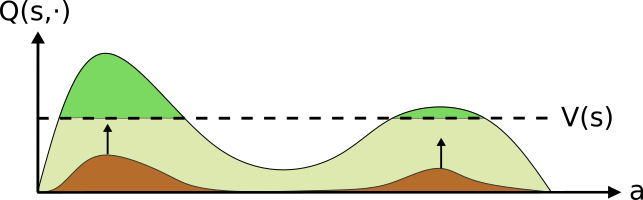}
	\caption{$\pi_2$}
\end{subfigure}%
\begin{subfigure}{0.45\textwidth}
    \label{fig: gac 3}
    \centering
	\includegraphics[width=\textwidth]{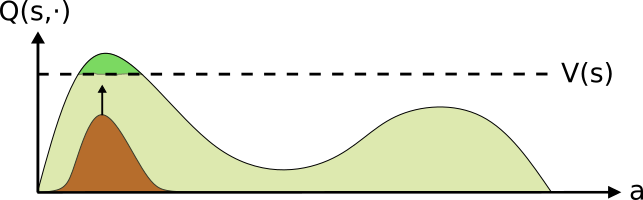}
	\caption{$\pi_k$}
\end{subfigure}%
\caption{Policy evolution of a general, non-parametric policy, where the target policy is a distribution over the actions with positive advantage. The horizontal dashed line denotes the current value of the policy, the colored green region denotes the target distribution (i.e., the actions with a positive advantage) and $\pi_k$ denotes the policy after multiple updates. As opposed to Delta and Gaussian distributions, the fixed point of this approach is the optimal policy.}
\label{fig:gac_evolution}
\end{figure}

\subsection{Generative Actor Critic (GAC)}\label{sec: gac}

Next, we introduce a practical implementation of the DPO framework. As shown in Section~\ref{sec:dist_approach}, DPO is composed of 4 elements: an actor, a delayed actor, a value, and an action-value estimator. The Generative Actor Critic (GAC) uses a generative actor trained using an AIQN, as described below. Contrary to parametric distribution functions, a generative neural network acts as a universal function approximator, enabling us to represent arbitrarily complex distributions, as corollary of the following lemma.

\begin{lemma}[Kernels and Randomization \citep{kallenberg2006foundations}]
Let $\pi$ be a probability kernel from a measurable space $S$ to a Borel space $\mathcal{A}$. Then there exists some measurable function ${f: S \times [0, 1] \to \mathcal{A}}$ such that if $\theta$ is $U(0, 1)$, then $f(s, \theta)$ has distribution $\pi(\action|\state)$ for every $\state \in S$.
\end{lemma}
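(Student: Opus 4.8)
The plan is to reduce the statement to a classical measure-theoretic fact about the existence of randomized representations of probability kernels, following the standard construction via generalized inverse (quantile) functions. First I would recall that a probability kernel $\pi$ from $S$ to a Borel space $\mathcal{A}$ can, by the Borel isomorphism theorem, be transported to a kernel into a Borel subset of $[0,1]$ (or of $\mathbb{R}$); so without loss of generality it suffices to treat the case $\mathcal{A} \subseteq \mathbb{R}$, and then pull the constructed function back through the Borel isomorphism, which preserves measurability in both the $S$ and the $[0,1]$ arguments. This reduction is the conceptual core: it lets us work with honest one-dimensional cumulative distribution functions rather than with an abstract measurable space.

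Next, for each $s \in S$ I would define the conditional c.d.f. $F(s,x) = \pi\big((-\infty,x] \mid s\big)$ and its generalized inverse (the quantile function) $f(s,\theta) = \inf\{ x : F(s,x) \ge \theta \}$ for $\theta \in (0,1)$, extended arbitrarily (say by a fixed constant in $\mathcal{A}$) at the endpoints. The two things to verify are: (i) for each fixed $s$, if $\theta \sim U(0,1)$ then $f(s,\theta)$ has law $\pi(\cdot \mid s)$ — this is the standard inverse-transform fact, which follows from the equivalence $f(s,\theta) \le x \iff \theta \le F(s,x)$ valid up to the (countably many, hence null) discontinuity points; and (ii) joint measurability of $(s,\theta) \mapsto f(s,\theta)$. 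For (ii) I would note that $x \mapsto F(s,x)$ is measurable in $s$ for each rational $x$ (it is built from the kernel $\pi$, which is measurable in $s$ by definition), monotone and right-continuous in $x$, and that $f(s,\theta) > q \iff F(s,q) < \theta$ for rational $q$; hence $\{(s,\theta): f(s,\theta) > q\}$ is a measurable set for every rational $q$, which gives joint measurability of $f$.

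The main obstacle I anticipate is handling the Borel-space generality cleanly — in particular making sure the Borel isomorphism $\phi : \mathcal{A} \to B \subseteq [0,1]$ and its inverse interact correctly with both measurability in $s$ and the $U(0,1)$ randomization, and dealing with atoms of the pushed-forward distribution (where the quantile function is locally constant) and with the boundary of $B$. None of these is deep, but they are the points where a careless argument would have a gap. Since this lemma is quoted verbatim from Kallenberg (\emph{Foundations of Modern Probability}, Lemma 3.22 there), the cleanest route for the paper is to cite it directly; the sketch above is the argument one would reconstruct, and I would present it at roughly that level of detail rather than grinding through the atom bookkeeping.
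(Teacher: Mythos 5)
Your sketch is correct and matches the paper's treatment: the paper states this lemma without proof, citing it directly from Kallenberg, and the argument you outline (Borel isomorphism onto a subset of $[0,1]$, conditional c.d.f.\ and its generalized inverse, inverse-transform sampling, and joint measurability via $\{f(s,\theta) > q\} = \{F(s,q) < \theta\}$ for rational $q$) is exactly the standard proof underlying that citation. The caveats you flag (atoms, the boundary of the isomorphic image) are handled by modifying $f$ on the per-$s$ null set where it lands outside the image, which changes no distribution, so there is no gap.
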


\textbf{Actor:} DPO defines the actor as one which is capable of representing arbitrarily complex policies. To obtain this we construct a generative neural network, an AIQN. The AIQN learns a mapping from a sampled noise vector $\tau \sim U([0,1]^n)$ to a target distribution. 

As illustrated in Figure~\ref{fig:architecture}, the actor network contains a recurrent cell which enables sequential generation of the action. This generation schematic ensures the autoregressive nature of the model. Each generated action dimension is conditioned only on the current sampled noise scalar $\tau^i$ and the previous action dimensions $\action^{i-1}, \hdots, \action^1$. In order to train the generative actor, the AIQN requires the ability to produce samples from the target distribution $\mathcal{D}^{\pi'}_{I^{\pi'}}$. Although we are unable to sample from this distribution, given an action, we are able to estimate its probability. 
An unbiased estimator of the loss can be attained by uniformly sampling actions and then multiplying them by their corresponding weight. More specifically, the weighted autoregressive quantile loss is defined by
\begin{equation}
\label{eq:weighted quantile loss}
    \sum_{\action_j \sim U(\mathcal{A})} \mathcal{D}^{\pi'}_{{I}^{\pi'}}(\action_j|\state) \sum_{i=1}^n \rho_{\tau^i_j}^k (\action^i_j - \pi_\phi (\tau^i_j | \action^{i-1}_j, \hdots, \action^1_j)) \,,
\end{equation}
where $\action^i_j$ is the $i^{th}$ coordinate of action $\action_j$, and $\rho_{\tau_j^i}^k$ is the Huber quantile
loss \citep{huber1992robust, dabney2018distributional}. Estimation of ${I}^{\pi'}$ in the target distribution is obtained using the estimated advantage.

\begin{wrapfigure}{r}{0.35\textwidth}
    \centering
    \includegraphics[width=0.33\textwidth]{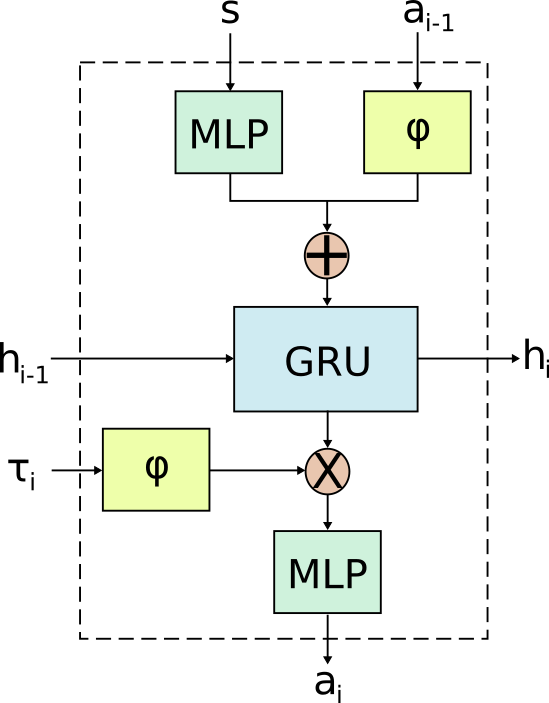}
    \caption{Illustration of the actor's architecture. $\otimes$ is the hadamard product, $\oplus$ a concatenation operator, and $\psi$ a mapping $[0,1] \mapsto \reals^d$.}
    \label{fig:architecture}
    \vspace{-0.3cm}
\end{wrapfigure}

\textbf{Delayed Actor:} The delayed actor, also known as Polyak averaging \citep{polyak1990new}, is an appealing requirement as it is common in off-policy actor-critic schemes \citep{lillicrap2015continuous}. The delayed actor is an additional AIQN $\pi_{\theta'}$, which tracks $\pi_\theta$. It is updated based on $\theta_{k+1}' = (1 - \alpha)\theta_k' + \alpha \theta_k$ and is used for training the value and critic networks.

\textbf{Value and Action-Value:} While it is possible to train a critic and use its empirical mean w.r.t. the policy as a value estimate, we found it to be noisy, resulting in bad convergence. We therefore train a value network to estimate the expectation of the critic w.r.t. the delayed policy. In addition, as suggested in \cite{fujimoto2018addressing}, we train two critic networks in parallel. During both policy and value updates, we refer to the minimal value of the two critics. We observed that this indeed reduced variance and improved overall performance.

To summarize, GAC combines 4 elements. The delayed actor tracks the actor using a Polyak averaging scheme. The value and critic networks estimate the performance of the delayed actor. Provided $Q$ and $v$ estimations, we are able to estimate the advantage of each action and thus propose the weighted autoregressive quantile loss, used to train the actor network. We refer the reader to the supplementary material for an exhaustive overview of the algorithm and architectural details.


\section{Experiments}\label{sec:experiments}

In order to evaluate our approach, we test GAC on a variety of continuous control tasks in the MuJoCo control suite \citep{todorov2012mujoco}. The agents are composed of $n$ joints: from 2 joints in the simplistic Swimmer task and up to 17 in the Humanoid robot task. The state is a vector representation of the agent, containing the spatial location and angular velocity of each element. The action is a continuous $n$ dimensional vector, representing how much torque to apply to each joint. The task in these domains is to move forward as much as possible within a given time-limit.

We run each task for 1 million steps and, as GAC is an off-poicy approach, evaluate the policy every 5000 steps and report the average over 10 evaluations. We train GAC using a batch size of 128 and uncorrelated Gaussian noise for exploration. Results are depicted in Figure~\ref{fig: results}. Each curve presented is a product of 5 training procedures with a randomly sampled seed. In addition to our raw results, we compare to the relevant baselines\footnote{We use the implementations of DDPG and PPO from the OpenAI baselines repo \citep{baselines}, and TD3 \citep{fujimoto2018addressing} from the authors GitHub repository.}, including: (1) DDPG \citep{lillicrap2015continuous}, \mbox{(2) TD3 \citep{fujimoto2018addressing}}, an off-policy actor critic approach which represents the policy using a deterministic delta distribution, and (3) PPO \citep{schulman2017proximal}, an on-policy method which represents the policy using a Gaussian distribution.

As we have shown in the previous sections, DPO and GAC only require \emph{some} target distribution to be defined, namely, a distribution over actions with positive advantage. In our results we present two such distributions: the linear and Boltzmann distributions (see Table \ref{table: distributions}). We also test a non-autoregressive version of our model \footnote{Theoretically, the dimensions of the actions may be correlated and thus should be represented using an auto-regressive model.} using an IQN. For completeness, we provide additional discussion regarding the various parameters and how they performed, in addition to a pseudo-code illustration of our approach, in the supplementary material.

\begin{figure}[t]
\centering
\begin{subfigure}{.33\textwidth}
	\centering
	\includegraphics[width=\linewidth]{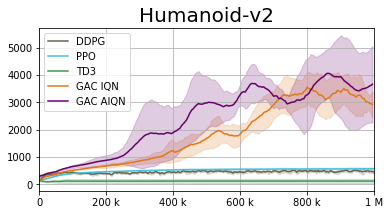}
\end{subfigure}%
\begin{subfigure}{.33\textwidth}
	\includegraphics[width=\linewidth]{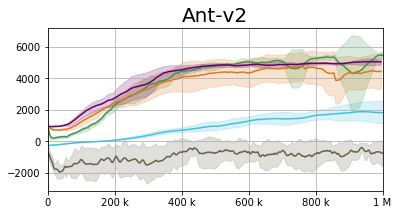}
\end{subfigure}%
\begin{subfigure}{.33\textwidth}
    \includegraphics[width=\linewidth]{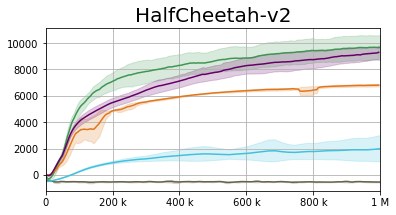}
\end{subfigure}%
\\
\begin{subfigure}{.33\textwidth}
	\includegraphics[width=\linewidth]{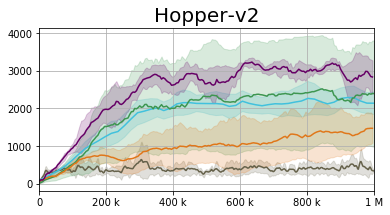}
\end{subfigure}%
\begin{subfigure}{.33\textwidth}
	\includegraphics[width=\linewidth]{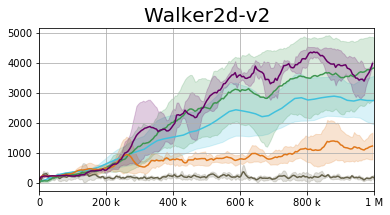}
\end{subfigure}%
\begin{subfigure}{.33\textwidth}
	\includegraphics[width=\linewidth]{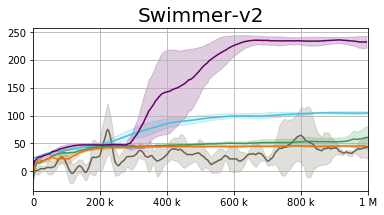}
\end{subfigure}%
\caption{Training curves on continuous control benchmarks. For the Generative Actor Critic approach we present both the Autoregressive and Non-autoregressive approaches, the exact hyperparameters for each domain are provided in the appendix.}
\label{fig: results}
\end{figure}

\textbf{Comparison to the policy gradient baselines:} Results in Figure~\ref{fig: results} show the ability of GAC to solve complex, high dimensional problems. GAC attains competitive results across all domains, often outperforming the baseline policy gradient algorithms and exhibiting lower variance. This is somewhat surprising, as GAC is a vanila algorithm, it is not supported by numerous improvements apparent in recent PG methods. In addition to these results, we provide numerical results in the supplementary material, which emphasize this claim.

\textbf{Parameter Comparison:} Below we discuss how various parameters affect the behavior of GAC in terms of convergence rates and overall performance:
\begin{enumerate}
    \item At each step, the target policy is approximated through samples using the weighted quantile loss (Equation \eqref{eq:weighted quantile loss}). The results presented in Figure~\ref{fig: results} are obtained using 32 (256 for HalfCheetah and Walker) samples at each step. 32 (128) samples are taken uniformly over the action space and 32 (128) from the delayed policy $\pi'$ (a form of combining exploration and exploitation). Ablation tests showed that increasing the number of samples improved stability and overall performance. Moreover, we observed that the combination of both sampling methods is crucial for success. 
    \item Not presented is the Uniform distribution, which did not work well. We believe this is due to the fact that the Uniform target provides an equal weight to actions which are very good while also to those which barely improve the value.
    \item We observed that in most tasks, similar to the observations of \cite{korenkevych2019autoregressive}, the AIQN model outperforms the IQN (non-autoregressive) one. 
\end{enumerate}

\begin{table}[t]
    \centering
    \caption{Relative best GAC results compared to the best policy gradient baseline}
    \hbox{\hbox{\hbox{\hbox{\hbox{
    \hspace{-1cm}
    \scalebox{0.85}{
        \begin{tabular}{c|c|c|c|c|c|c}
            \hline
            \thead{Environment} & Humanoid-v2 & Walker2d-v2 & Hopper-v2 & HalfCheetah-v2 & Ant-v2 & Swimmer-v2\\
            \hline
            \thead{Relative Result} & $\mathbf{+3447}\,(+595\%)$ & $\mathbf{+533}\,(+14\%)$ & $\mathbf{+467}\,(+17\%)$ & $\mathbf{-381}\,(-4\%)$ & $\mathbf{-444}\,(-8\%)$ & $\mathbf{+107}\,(+81\%)$\\
            \hline
        \end{tabular}
    }}}}}}
    \label{tab: relative results}
\end{table}


\section{Related Work}
\label{sec:related_work}
\textbf{Distributional RL:} Recent interest in distributional methods for RL has grown with the introduction of deep RL approaches for learning the distribution of the return. \cite{bellemare2017distributional} presented the C51-DQN which partitions the possible values $[-v_{\max},v_{\max}]$ into a fixed number of bins and estimates the p.d.f. of the return over this discrete set. \cite{dabney2017distributional} extended this work by representing the c.d.f. using a fixed number of quantiles. Finally, \cite{dabney2018implicit} extended the QR-DQN to represent the entire distribution using the Implicit Quantile Network (IQN). In addition to the empirical line of work, \cite{qu2018nonlinear} and \cite{rowland2018analysis} have provided fundamental theoretical results for this framework.

\textbf{Generative Modeling:} 
Generative Adversarial Networks (GANs) \citep{goodfellow2014generative} combine two neural networks in a game-theoretic approach which attempt to find a Nash Equilbirium. This equilibrium is found when the generative model is capable of ``fooling" the discriminator (i.e., the discriminator is no longer capable of distinguishing between samples produced from the real distribution and those from the generator). Multiple GAN models and training methods have been introduced, including the Wasserstein-GAN \citep{arjovsky2017wasserstein} which minimizes the Wasserstein loss. However, as the optimization scheme is highly non-convex, these approaches are not proven to converge and may thus suffer from instability and mode collapse \citep{salimans2016improved}. 

\textbf{Policy Learning:} 
Learning a policy is generally performed using one of two methods. The Policy Gradient (PG) \citep{reinforce,pg_theorem} defines the gradient as the direction which maximizes the reward under the assumed policy parametrization class. Although there have been a multitude of improvements, including the ability to cope with deterministic policies \citep{silver2014deterministic,lillicrap2015continuous}, stabilize learning through trust region updates \citep{schulman2015trust,schulman2017proximal} and bayesian approaches \citep{ghavamzadeh2016bayesian}, these methods are bounded to parametric distribution sets (as the gradient is w.r.t. the log probability of the action). An alternative line of work formulates the problem as a maximum entropy \citep{haarnoja2018soft}, this enables the definition of the target policy using an energy functional. However, training is performed via minimizing the KL-divergence. The need to know the KL-divergence limits practical implementation to parametric distributions functions, similar to PG methods.

\section{Discussion and Future Work}
\label{sec:discussion}

In this work we presented limitations inherent to empirical Policy Gradient (PG) approaches in continuous control. While current PG methods in continuous control are computationally efficient, they are not ensured to converge to a global extrema. As the policy gradient is defined w.r.t. the log probability of the policy, the gradient results in local changes in the action space (e.g., changing the mean and variance of a Gaussian policy). These limitations do not occur in discrete action spaces.

In order to ensure better asymptotic results, it is often needed to use methods that are more complex and computationally demanding (i.e., ``No Free Lunch" \citep{wolpert1997no}). Existing approaches attempting to mitigate these issues, either enrich the policy space using mixture models, or discretize the action space. However, while the discretization scheme is appealing, there is a clear trade-off between optimality and efficiency. While finer discretization improves guarantees, the complexity (number of discrete actions) grows exponentially in the action dimension \citep{tang2019discretizing}.

Similar to the limitations inherent in PG approaches, these limitations also exist when considering mixture models, such as Gaussian Mixtures. A mixture model of $k$-Gaussians provides a categorical distribution over $k$ Gaussian distributions. The policy gradient w.r.t. these parameters, similarly to the single Gaussian model, directly controls the mean $\mu$ and variance $\sigma$ of each Gaussian independently. As such, even a mixture model is confined to local improvement in the action space.

In practical scenarios, and as the number of Gaussians grows, it is likely that the modes of the mixture would be located in a vicinity of a global optima. A Gaussian Mixture model may therefore be able to cope with various non-convex continuous control problems. Nevertheless, we note that Gaussian Mixture models, unlike a single Gaussian, are numerically unstable. Due to the summation over Gaussians, the log probability of a mixture of Gaussians does not result in a linear representation. This can cause numerical instability, and thus hinder the learning process. These insights lead us to question the optimality of current PG approaches in continuous control, suggesting that, although these approaches are well understood, there is room for research into alternative policy-based approaches.



In this paper we suggested the Distributional Policy Optimization (DPO) framework and its empirical implementation - the Generative Actor Critic (GAC). We evaluated GAC on a series of continuous control tasks under the MuJoCo control suite. When considering overall performance, we observed that despite the algorithmic maturity of PG methods, GAC attains competitive performance and often outperforms the various baselines. Nevertheless, as noted above, there is ``no free lunch". While GAC remains as sample efficient as the current PG methods (in terms of the batch size during training and number of environment interactions), it suffers from high computational complexity. \cmnt{Our current model uses a naive sampling scheme with many samples having a negative advantage. These samples are thus discarded.}

Finally, the elementary framework presented in this paper can be extended in various future research directions. First, improving the computational efficiency is a top priority for GAC to achieve deployment in real robotic agents. In addition, as the target distribution is defined w.r.t. the advantage function, future work may consider integrating uncertainty estimates in order to improve exploration. Moreover, PG methods have been thoroughly researched and many of their improvements, such as trust region optimization \citep{schulman2015trust}, can be adapted to the DPO framework. Finally, DPO and GAC can be readily applied to other well-known frameworks such as the Soft-Actor-Critic \citep{haarnoja2018soft}, in which entropy of the policy is encouraged through an augmented reward function. We believe this work is a first step towards a principal alternative for RL in continuous action space domains. 


\cmnt{
In this work we presented a new paradigm for policy search in continuous action spaces. Inherent limitations of parametric distribution classes are eliminated using a distributional framework (DPO), yielding global optimality. DPO is based on insights from Policy Iteration schemes, in which the policy is updated in distribution space. This is contrary to current Policy Gradient methods, in the continuous action setting, for which the policy is optimized in the space of its parameters, which is inherently limited to the space of actions. DPO builds a principal alternative for RL in continuous action space domains. 

Based upon DPO, we proposed a practical off-policy generative algorithm. In GAC, a generative model is optimized by minimizing the Wasserstein distance between the actor's policy distribution and some improving policy distribution. Such an optimization method is appealing as it can be implemented using Implicit Quantile Networks \citep{dabney2018implicit}. In our work we chose to model the autoregressive nature of the action using recurrent neural networks. Empirically we see that GAC compares to, and often outperforms, current state-of-the-art PG methods (despite their current algorithmic maturity).


The elementary framework presented in this paper can be extended in various future research directions. It is interesting to explore other generative optimization methods as well as their practical and theoretical implications. In this work we used generic schemes for sampling and targeting actions. Future work may explore this venue by stabilizing the optimization process and speeding up learning using proximal methods, better exploration schemes, and improved uncertainty estimates of the critic and value. In addition, in its current formulation, GAC suffers from high computation complexity due to the recurrent autoregressive model. Attention-based transformer models \citep{vaswani2017attention} can mitigate slow convergence rates and lower the overall model complexity. Finally, GAC can be readily applied to other well-known frameworks such as the Soft-Actor-Critic \citep{haarnoja2018soft}, in which entropy of the policy is encouraged through an augmented reward function.
}

\section{Acknowledgement}
We thank Yonathan Efroni for his fruitful comments that greatly improved this paper.

\bibliographystyle{plainnat}
\bibliography{bibliography.bib}

\newpage

\appendix

\section{Proof of Proposition~\ref{prop:gaussian doesnt converge}}

 Let $\epsilon > 0$. We consider a single state MDP (i.e., x-armed bandit) with action space $\mathcal{A} = \mathbb{R}^d$ and a multi-modal reward function defined by
    \begin{equation*}
        r(\action) = \epsilon \delta_{\tilde{\mu}_0}(\action) + (1-\epsilon) \delta_{\tilde{\mu}_0 + D\cdot\mathbf{1}}(\action),
    \end{equation*}
    where $D = D(R, \epsilon)$ will be defined later, and $\delta_x(\action)$ is the Dirac delta function satisfying $\int_{\action} g(\action) d(\delta_x(\action)) = g(x)$ for all continuous compactly supported functions $g$.
    
    Denote by $f_{\mu, \Sigma}(\action)$ the multivariate Gaussian distribution, defined by
    \begin{equation*}
        f_{\mu, \Sigma}(\action) = (2 \pi \abs{\Sigma})^{-\frac{k}{2}}e^{-(\action-\mu)^T\Sigma^{-1}(\action-\mu)}.
    \end{equation*}
    
    In PG, we assume $\mu$ is parameterized by some parameters $\theta$. Without loss of generality, let us consider the derivative with respect to $\theta = \mu$. At iteration $k$ the derivative can be written as
    \[
        \nabla_\mu \log \pi_\mu (\action) \mid_{\mu=\mu_k} =
        \Sigma^{-1} \pth{\action - \mu_k }.
    \]
    PG will thus update the policy parameter $\mu$ by
    \[
    \mu_{k+1} = \mu_k + \alpha_k \braces{\E_{\action \sim \mathcal{N}(\mu_k, \Sigma)}
        \Sigma^{-1}\pth{\action - \mu_k}r(\action)}.
    \]
    Notice that given a Bernoulli random variable $B = \begin{cases} 0 & ,\text{w.p. }\epsilon \\ D & ,\text{w.p. }1-\epsilon \end{cases}$, one can write ${r(\action) = \E \delta_{\tilde{\mu}_0 + B\cdot\mathbf{1}}(\action)}$. Then by Fubini's theorem we have
     \begin{align*}
        &\E_{\action \sim \mathcal{N}(\mu_k, \Sigma)}
        \pth{\action - \mu_k}r(\action) \\
        &= \E_{B} \E_{\action \sim \mathcal{N}(\mu_k, \Sigma)} 
        \pth{\action - \mu_k}\delta_{\tilde{\mu}_0 + B\cdot\mathbf{1}}(\action) \\
        &= \E_{B} \pth{\tilde{\mu}_0 + B\cdot\mathbf{1} - \mu_k} f_{\mu_k, \Sigma} (\tilde{\mu}_0 + B\cdot\mathbf{1}).
    \end{align*}
    We wish to show that the gradient has a higher correlation with the direction of $\tilde{\mu}_0 - \mu_k$ rather than $\tilde{\mu}_0 + D\cdot\mathbf{1} - \mu_k$. That is we wish to show that
    \begin{equation*}
        \pth{\E_{\action \sim \mathcal{N}(\mu_k, \Sigma)}
        \Sigma^{-1}\pth{\action - \mu_k}r(\action)}^T\pth{\frac{\tilde{\mu}_0 - \mu_k}{\norm{\tilde{\mu}_0 - \mu_k}}} > \pth{\E_{\action \sim \mathcal{N}(\mu_k, \Sigma)}
        \Sigma^{-1}\pth{\action - \mu_k}r(\action)}^T\pth{\frac{\tilde{\mu}_0 + D\cdot\mathbf{1} - \mu_k}{\norm{\tilde{\mu}_0 + D\cdot\mathbf{1} - \mu_k}}}.
    \end{equation*}
    Substituting $r(\action)$ the above equation is equivalent to
    \begin{align}
        &\pth{\E_{B} \pth{\tilde{\mu}_0 + B\cdot\mathbf{1} - \mu_0} f_{\mu_0, \Sigma} (\tilde{\mu}_0 + B\cdot\mathbf{1})}^T\pth{\frac{\tilde{\mu}_0 - \mu_k}{\norm{\tilde{\mu}_0 - \mu_k}}} \nonumber \\
        & >\pth{\E_{B} \pth{\tilde{\mu}_0 + B\cdot\mathbf{1} - \mu_0} f_{\mu_0, \Sigma} (\tilde{\mu}_0 + B\cdot\mathbf{1})}^T\pth{\frac{\tilde{\mu}_0 + D\cdot\mathbf{1} - \mu_k}{\norm{\tilde{\mu}_0 + D\cdot\mathbf{1} - \mu_k}}}.  \label{eq:to_prove}
    \end{align}
    Proving Equation \eqref{eq:to_prove} for all $k \geq 0$ will complete the proof. \\
    We continue the proof by induction on $k$. \\
    \textbf{Base case (k = 0):} \\
    Recall that $\mu_0 \in B_{R}(\tilde{\mu}_0)$.
    Writing Equation \eqref{eq:to_prove} explicitly we get
    \begin{align*}
    &\text{LHS} = \epsilon \norm{\tilde{\mu}_0-\mu_0} f_{\mu_0, \Sigma} (\tilde{\mu}_0) 
    + (1-\epsilon) f_{\mu_0, \Sigma} (\tilde{\mu}_0 + D\cdot\mathbf{1}) \pth{\tilde{\mu}_0 - \mu_0 + D\cdot\mathbf{1}}^T \frac{\tilde{\mu}_0 - \mu_0}{\norm{\tilde{\mu}_0 - \mu_0}}, \\
    &\text{RHS} = \epsilon f_{\mu_0, \Sigma} (\tilde{\mu}_0) \pth{\tilde{\mu}_0-\mu_0}^T  \frac{\tilde{\mu}_0 - \mu_0 + D\cdot\mathbf{1}}{\norm{\tilde{\mu}_0 - \mu_0 + D\cdot\mathbf{1}}}
    + (1-\epsilon) \norm{\tilde{\mu}_0 - \mu_0 + D\cdot\mathbf{1}} f_{\mu_0, \Sigma} (\tilde{\mu}_0  + D\cdot\mathbf{1}).
    \end{align*}
    Since $f_{\mu_0, \Sigma} (\tilde{\mu}_0  + D\cdot\mathbf{1}) \propto \exp \braces{-D\cdot \mathbf{1}}$ we only need to show that for large enough $D$ (which depends on the constants $\epsilon$ and $R$)
    \begin{equation*}
         \norm{\tilde{\mu}_0-\mu_0} > \pth{\tilde{\mu}_0-\mu_0}^T\cdot\mathbf{1}  \frac{D}{\norm{\tilde{\mu}_0 - \mu_0 + D\cdot\mathbf{1}}},
    \end{equation*}
    as all other values tend to zero. \\
    If $\pth{\tilde{\mu}_0-\mu_0}^T \mathbf{1} < 0$ then we are done. Otherwise, if $\pth{\tilde{\mu}_0-\mu_0}^T \mathbf{1} \geq 0$ then
    \begin{equation*}
        \pth{\tilde{\mu}_0-\mu_0}^T\cdot\mathbf{1}  \frac{D}{\norm{\tilde{\mu}_0 - \mu_0 + D\cdot\mathbf{1}}}
        \leq \norm{\tilde{\mu}_0-\mu_0}   \frac{D}{\norm{\tilde{\mu}_0 - \mu_0 + D\cdot\mathbf{1}}} \leq \norm{\tilde{\mu}_0-\mu_0},
    \end{equation*}
    where in the first step we used the Cauchy–Schwarz inequality, and in the second step we used the fact if a vector $\mathbf{x}$ satisfies $\mathbf{x}^T \mathbf{1} \geq 0$ then for any constant $C > 0$, $\norm{\mathbf{x} + C \cdot \mathbf{1}} \geq C$.
    
    \textbf{Induction step:} \\
    Assume Equation~\eqref{eq:to_prove} holds from some $k \geq 0$. Then by the gradient procedure we know that $\mu_k \in B_R(\tilde{\mu}_0)$, and thus we can use the same proof as the base case. Hence, $\norm{v^* - v^{\pi_\infty}}_\infty = 1 - 2\epsilon$ and the result follows for $\epsilon < \frac{1}{3}$.

\section{Experimental Details}

\begin{algorithm}[t!]
    \caption{Generative Actor Critic}
	\label{alg:generative actor critic}
	\begin{algorithmic}[1]
	    \State Input: number of time steps $T$, policy samples $K$, minibatch size $N$
		\State Initialize critic networks $Q_{\theta_1}$, $Q_{\theta_2}$, value network $v_\psi$ and actor network $\pi_\phi$ with random parameters $\theta_1$, $\theta_2$, $\psi$, $\phi$
		\State Initialize target networks $\theta'_1 \gets \theta_1$, $\theta'_2 \gets \theta_2$, $\psi' \gets \psi$, $\phi' \gets \phi$
		\State Initialize replay buffer $\B$
		\For{$t = 0, 1, ..., T$}
		    \State Select action with exploration noise $\action \sim \pi_\phi(s) + \epsilon$,
		    \State $\epsilon \sim \mathcal{N}(0, \sigma)$ and observe reward $r$ and new state $s'$
		    \State Store transition tuple $(\state, \action, r, \state')$ in $\B$
		    \State Sample mini-batch of $N$ transitions $(\state, \action, r, \state')$ from $\B$
		    \State $y_Q \gets r + \gamma v_{\psi'}(\state')$
		    \State Update critics:
		    \State  \begin{equation*}
		                \theta \gets \theta - \frac{1}{N} \nabla_{\theta_i} \sum (y_Q - Q_{\theta_i}(\state,\action))^2
		            \end{equation*}
		    \State $\tilde{\action}_j \gets \pi_{\phi'}(\tau | \state),  \forall  1 \leq j \leq K,  \tau \sim U([0,1]^n)$
		    \State $y_v \gets \min_{i=1,2} \sum_{j=1}^K Q_{\theta_i'}(\state, \tilde \action_j)$
		    \State Update value:
		    \Statex  \begin{equation*}
		                \psi \gets \psi - N^{-1} \nabla_\psi \sum (y_v - v_{\psi}(\state))^2
		            \end{equation*}
		    \State Sample actions $\hat \action_1, \hdots, \hat \action_K$ from sampling policy $\sigma (\pi_{\phi'}, \mathcal{A})$
		    \State $\hat{\mathcal{A}}_k \gets \{ \hat \action_j : 1 \leq j \leq K  ,  \min_{i=1,2} Q_{\theta_i'} (\state_k, \hat \action_j) > v_{\psi'} (\state_k) \}$
		    \State Update actor:
		    \Statex  \begin{equation*}
		                \phi \gets \phi - \frac{1}{N} \nabla_\phi \sum_{n=1}^N \sum_{\hat \action \in \hat{\mathcal{A}}_k} \sum_{i=1}^{\text{action dim}} \rho_{\tau_i}^k \pth{\hat \action^i - \pi_\phi (\tau_i | \hat \action^{i-1}, \hdots, \hat \action^{1}, \state_k)} \mathcal{D}^{\pi_k'}_{I^{\pi_k'}}
		            \end{equation*}
		    \State Update target networks:
		    \Statex \begin{align*}
		                &\theta'_i \gets \tau \theta_i + (1 - \tau) \theta'_i \\
		                &\psi' \gets \tau \psi + (1 - \tau) \psi' \\
		                &\phi' \gets \tau \phi + (1 - \tau) \phi'
		            \end{align*}
		\EndFor
	\end{algorithmic}
\end{algorithm}

Our approach is depicted in Algorithm~\ref{alg:generative actor critic}.
In addition, we provide a numerical comparison of the various approaches in Table~\ref{tab: results}. These results show a clear picture. 

\paragraph{Target policy estimation:} To estimate the target policy, for each state $\state$, we sample 128 actions uniformly from the action space $\mathcal{A}$, 128 samples from the target policy $\pi_{\phi'}$ and the per-sample loss is weighted by the positive advantage $A(\state, \cdot)^+$. This can be seen as a form of `exploration-exploitation' - while uniform sampling ensures proper exploration of the action set, sampling from the policy has a higher probability of producing actions with positive advantage.

The loss is thus the weighted quantile loss. We do note that while one would want to define the target policy as the linear/Boltzmann distribution over the positive advantage, this is not possible in practice. As actions are sampled, we can only construct such a distribution on a per-batch instance. This approach does provide higher weight for better performing actions, but does result in a different underlying distribution. In addition, in order to ensure stability, we normalize the quantile loss weights in each batch - this is to ensure that very small (high) advantage values do not incur a near-zero (huge) gradients which may harm model stability.

\paragraph{Architectural Details:} ~\\
\textbf{Actor:} As presented in Figure~\ref{fig:architecture}, our architecture incorporates a recurrent cell. The recurrent cell ensures that each dimension $i$ of the action is a function of the state $\state$, the sampled quantile $\tau_i$ and the previous predicted action dimensions $\action_1, \hdots, \action_{i-1}$. Notice that using this architecture, the prediction of $\action_i$ is not affected by $\tau_1, \hdots, \tau_{i-1}$. This approach is a strict requirement when considering the autoregressive approach.

We believe other, potentially more efficient architectures can be explored. For instance, a fully connected network, similar to the non-autoregressive approach, with attention over the previous action dimensions may work well \citep{vaswani2017attention}. Such evaluation is out of the scope of this work and is an interesting investigation for future work.

\textbf{Value \& Critic:} While the actor architecture is a non-standard approach, for both the value and critic networks, we use the classic MLP network. Specifically, we use a two layer fully connected network with 400 and 300 neurons in each layer, respectively. Similarly to \cite{fujimoto2018addressing}, the critic receives a concatenated vector of both the state and action as input.

\begin{table}[t!]
    \centering
    \caption{Comparison of the maximal attained value across training.}
    \begin{tabular}{c|c|c|c|c|c}
        \hline \\
        \thead{Environment} & \thead{DDPG} & \thead{TD3} & \thead{PPO} & \thead{GAC AIQN} & \thead{GAC IQN} \\
        \hline \\
        Hopper-v2 & $638 \pm 477$ & $2521 \pm 1429$ & $2767 \pm 421$ & $3234 \pm 122$ & $1473 \pm 421$ \\
        \hline \\
        Humanoid-v2 & $519 \pm 44$ & $184 \pm 67$ & $579 \pm 30$ & $4056 \pm 878$ & $3547 \pm 572$ \\
        \hline \\
        Walker2d-v2 & $364 \pm 223$ & $3824 \pm 995$ & $3694 \pm 765$ & $4357 \pm 160$ & $1390 \pm 651$ \\
        \hline \\
        Swimmer-v2 & $75 \pm 46$ & $60 \pm 20$ & $131 \pm 1$ & $238 \pm 3$ & $45 \pm 0$ \\
        \hline \\
        Ant-v2 & $-399 \pm 323$ & $5508 \pm 191$ & $2899 \pm 973$ & $5064 \pm 208$ & $4784 \pm 895$ \\
        \hline \\
        HalfCheetah-v2 & $-395 \pm 81$ & $9681 \pm 908$ & $3787 \pm 2249$ & $9300 \pm 515$ & $6807 \pm 98$ \\
        \hline
    \end{tabular}
    \label{tab: results}
\end{table}

\begin{table}[t]
    \centering
    \caption{AIQN Hyperparameters}
    \begin{tabular}{c|c|c|c}
        \hline
         & \thead{Humanoid-v2, Hopper-v2,\\Ant-v2, Swimmer-v2} & \thead{Walker2d-v2} & \thead{HalfCheetah-v2} \\
        \hline
        Distribution & $\max \{\exp{Q(s, a) - v(s)}, 20\}$ & $\text{softmax} (Q(s, a) - v(s))$ & $Q(s, a) - v(s)$ \\
        \hline
        $\pi$ LR & $1e^{-4}$ & $1e^{-3}$ & $1e^{-3}$ \\
        \hline
        $Q/v$ LR & $1e^{-3}$ & $1e^{-3}$ & $1e^{-3}$ \\
        \hline
        $\pi$ grad clip & $1$ & $\infty$ & $\infty$ \\
        \hline
        $Q/v$ grad clip & $5$ & $\infty$ & $\infty$ \\
        \hline
        \# of samples & $64$ & $256$ & $256$ \\
        \hline
    \end{tabular}
    \label{tab: hyper params}
\end{table}

\section{Discussion and Common Mistakes}
As shown in the body of the paper, there exist alternative approaches. We take this section in order to provide some additional discussion into how and why we decided on certain approaches and what else can be done.

\subsection{Alternative Gradient Approaches}
Going back to the policy gradient approach, specifically the deterministic version, we can write the value of the current policy of our generative model (policy) as:
\begin{equation*}
    v^\pi (\state) = \int_{\tau \in [0,1]^n} Q(\state, F^{-1} (\state | \tau)) d\tau \,,
\end{equation*}
or an estimation using samples
\begin{equation*}
    v^\pi (\state) = \frac{1}{N} \sum_{i=1}^N Q(\state, F^{-1} (\state | \tau_i)) \mid_{\tau_i \sim U([0,1]^n)} \,.
\end{equation*}
It may then be desirable to directly optimize this objective function by taking the gradient w.r.t. the parameters of $F^{-1}$. However, this approach \textbf{does not ensure optimality}. Clearly, the gradient direction is provided by the critic $Q$ for each value of $\tau$. This can be seen as optimizing an ensemble of DDPG models whereas each $\tau$ value selects a different model from this set. As DDPG is a uni-modal parametric distribution and is thus not ensured to converge to an optimal policy, this approach suffers from the same caveats.

However, Evolution Strategies \citep{salimans2017evolution} is a feasible approach. As opposed to the gradient method, this approach can be seen as directly calculating $\nabla_\pi v^\pi$, i.e., it estimates the best direction in which to move the policy. As long as the policy is still capable of representing arbitrarily complex distributions this approach should, in theory, converge to a global maxima. However, as there is interest in sample efficient learning, our focus in this work was on introducing an off-policy learning method under the common actor-critic framework.

\subsection{Target Networks and Stability}
Our empirical approach, as shown in Algorithm~\ref{alg:generative actor critic}, uses a target network for each approximator (critic, value and the target policy). While the critic and value target networks are mainly for stability of the empirical approach, they can be disposed of, the policy target network is required for the algorithm to converge (as shown in Section~\ref{sec:dist_approach}).

The quantile loss, and any distribution loss in general, is concerned with moving probability mass from the current distribution towards the target distribution. This leads to two potential issues when lacking the delayed policy: (1) non-quasi-stationarity of the target distribution, and (2) non-increasing policy.

The first point is important from an optimization point of view. As the quantile loss is aimed to estimate some target distribution, the assumption is that this distribution is static. Lacking the delayed policy network, this distribution potentially changes at each time step and thus can not be properly estimated using sample based approaches. The delayed policy solves this problem, as it tracks the policy on a slower timescale it can be seen as quasi-static and thus the target distribution becomes well defined.

The second point is important from an RL point of view. In general, RL proofs evolve around two concepts - either you are attempting to learn the optimal Q values and convergence is shown through proving the operator is contracting towards a unique globally stable equilibrium, or the goal is to learn a policy and thus the proof is based on showing the policy is monotonically improving. As the delayed policy network slowly tracks the policy network, the multi-timescale framework tells us that ``by the time" the delayed policy network changes, the policy network can be assumed to converge. As the policy network is aimed to estimate a distribution over the positive advantage of the delayed policy, this approach ensures that the delayed policy is monotonically improving (under the correct theoretical step-size and realizability assumptions).

\subsection{Sample Complexity and Policy Samples}
When considering sample complexity in its simplest form, our approach is as efficient as the baselines we compared to. It does not require the use of larger batches nor does it require more environment samples. However, as we are optimizing a generative model, it does require sampling from the model itself.

As opposed to \cite{dabney2018implicit}, we found that in our approach the number of samples does affect the convergence ability of the network. While using 16 samples for each transition in the batch did result in relatively good policies, increasing this number affected stability and performance positively. For this reason, we decided to run with a sample size of 128. This results in longer training times. For instance, training the TD3 algorithm on the Hopper-v2 domain using two NVIDIA GTX 1080-TI cards took around 3 hours, whereas our approach took 40 hours to train. We argue that as often the resulting policy is what matters, it is worth to sacrifice time efficiency in order to gain a better final result.

\subsection{Generative Adversarial Policy Training}
Our approach used the AIQN framework in order to train a generative policy. An alternative method for learning distributions from samples is using the GAN framework. A discriminator can be trained to differentiate between samples from the current policy and those from the target distribution; thus, training the policy to `fool' the discriminator will result in generating a distribution similar to the target.

However, while the GAN framework has seen multiple successes, it still lacks the theoretical guarantees of convergence to the Nash equilibrium. As opposed to the AIQN which is trained on a supervision signal, the GAN approach is modeled as a two player zero-sum game.

\section{Distributional Policy Optimization Assumptions}
We provide the assumptions required for the 3-timescale stochastic approximation approach, namely DPO, to converge.

The first assumption is regarding the step-sizes. It ensures that the policy moves on the fastest time-scale, the value and critic on an intermediate and the delayed policy on the slowest. This enables the quasi-static analysis in which the fast elements see the slower as static and the slow view the faster as if they have already converged.
\begin{assumption}\label{assm: step_sizes}[Step size assumption]
    \begin{align*}
        &\sum_{n=0}^\infty \alpha_k = \sum_{n=0}^\infty \beta_k = \infty = \sum_{n=0}^\infty \delta_k = \infty, \\
        &\sum_{n=0}^\infty \left( \alpha_k^2 + \beta_k^2 + \delta_k^2 \right) < \infty, \\
        &\frac{\alpha_k}{\beta_k} \rightarrow 0 \text{\enspace and \enspace} \frac{\beta_k}{\delta_k} \rightarrow 0 \enspace .
    \end{align*}
\end{assumption}

The second assumption requires that the action set be compact. Since there exists a deterministic policy which is optimal, this assumption ensures that this policy is indeed finite and thus the process converges.

\begin{assumption}\label{assm: compact}[Compact action set]
    The action set $\mathcal{A}(\state)$ is compact for every $\state \in \mathcal{S}$.
\end{assumption}

The final two assumptions (\ref{assm: lipschitz bounded} and \ref{assm: distribution convergence}) ensure that $\pi$, moving on the fast time-scale, converges. The Lipschitz assumption ensures that the action-value function and in turn the target distribution $D_{I^{\pi'}}$ are smooth.

\begin{assumption}\label{assm: lipschitz bounded}[Lipschitz and bounded Q]
    The action-value function $Q^\pi (\state, \cdot)$ is Lipschitz and bounded for every $\pi \in \Pi$ and $\state \in \mathcal{S}$.
\end{assumption}

\begin{assumption}\label{assm: distribution convergence}
    For any $\mathcal{D} \in \Pi$ and $\theta \in \Theta$, there exists a loss $L$ such that $\nabla_\theta L(\pi_\theta, \mathcal{D}) \rightarrow 0$ as $\pi_\theta \rightarrow \mathcal{D}$.
\end{assumption}

Finally, it can be shown that DPO converges under these assumptions using the standard multi-timescale approach.




\end{document}